\newcommand{\opnorm}{\@ifstar\@opnorms\@opnorm}
\newcommand{\@opnorms}[1]{  \left|\mkern-1.5mu\left|\mkern-1.5mu\left|
   #1
  \right|\mkern-1.5mu\right|\mkern-1.5mu\right|
}
\newcommand{\@opnorm}[2][]{  \mathopen{#1|\mkern-1.5mu#1|\mkern-1.5mu#1|}
  #2
  \mathclose{#1|\mkern-1.5mu#1|\mkern-1.5mu#1|}
}
\def\ddefloop#1{\ifx\ddefloop#1\else\ddef{#1}\expandafter\ddefloop\fi}
\def\ddef#1{\expandafter\def\csname bb#1\endcsname{\ensuremath{\mathbb{#1}}}}
\def\ddef#1{\expandafter\def\csname c#1\endcsname{\ensuremath{\mathcal{#1}}}}
\def\ddef#1{\expandafter\def\csname h#1\endcsname{\ensuremath{\widehat{#1}}}}
\DeclareMathOperator*{\argmax}{arg\,max}
\newcommand{\eps}{\ensuremath{\epsilon}}
\newcommand{\veps}{\ensuremath{\varepsilon}}
\def\bfe{\mathbf{e}}
\def\Pr{\textup{Pr}}
\def\R{\mathbb{R}}
\def\1{\mathds{1}}
\newcommand{\ip}[2]{\left\langle #1, #2 \right \rangle}
\def\Rad{\mathfrak{R}}
\def\hcR{\widehat{\cR}}
\def\mop{\cM}
\def\conv{\textup{conv}}
\numberwithin{equation}{section}
\declaretheorem[numberwithin=section]{theorem}
\declaretheorem[numberlike=theorem]{lemma}
\declaretheoremstyle[qed={\ensuremath\Diamond}]{remstyle}
\def\srelu{\sigma_{\textup{r}}}
\def\cifar{\texttt{cifar}\xspace}
\def\ciften{\texttt{cifar10}\xspace}
\def\cifhun{\texttt{cifar100}\xspace}
\def\mnist{\texttt{mnist}\xspace}
\def\polylog{\textup{poly\,log}}
\title{Spectrally-normalized margin bounds for neural networks}
\author{
  Peter L. Bartlett\thanks{{\tt<\url{peter@berkeley.edu}>}; University
  of California, Berkeley and Queensland University of Technology; work performed while visiting the Simons Institute.}
  \and
  Dylan J. Foster\thanks{{\tt<\url{djf244@cornell.edu}>}; Cornell University; work performed while visiting the Simons Institute.}
  \and
  Matus Telgarsky\thanks{{\tt<\url{mjt@illinois.edu}>}; University of Illinois, Urbana-Champaign; work performed while visiting the Simons Institute.}
}
\date{} 
\begin{document}

\maketitle

\begin{abstract}
  This paper presents a margin-based multiclass generalization bound for neural networks
  that scales with their margin-normalized \emph{spectral complexity}: their Lipschitz constant,
  meaning the product of the spectral norms of the weight matrices, times a certain correction factor.
  This bound is empirically investigated for a standard AlexNet network trained
  with SGD on the \mnist and \ciften datasets, with both original and random
  labels;  the bound, the Lipschitz constants, and the excess risks are all in direct correlation,
  suggesting both that SGD selects predictors whose complexity scales with the difficulty of the learning task,
  and secondly that the presented bound is sensitive to this complexity.
\end{abstract}

\section{Overview}
\label{sec:intro}

Neural networks owe their astonishing success not only to their ability
to fit any data set:
they also \emph{generalize well}, meaning they provide a close fit on unseen data.
A classical statistical adage is that models capable of fitting too much will
generalize poorly; what's going on here?

Let's navigate the many possible explanations provided by statistical theory.
A first observation is that any analysis based solely on the number of possible labellings on a finite
training set --- as is the case with VC dimension --- is doomed:
if the function class can fit all possible labels (as is the case with neural networks in standard configurations \citep{rethinking}),
then this analysis can not distinguish it from the collection of all possible functions!

\begin{figure}[h!]
  \centering
  \includegraphics[width=0.71\textwidth]{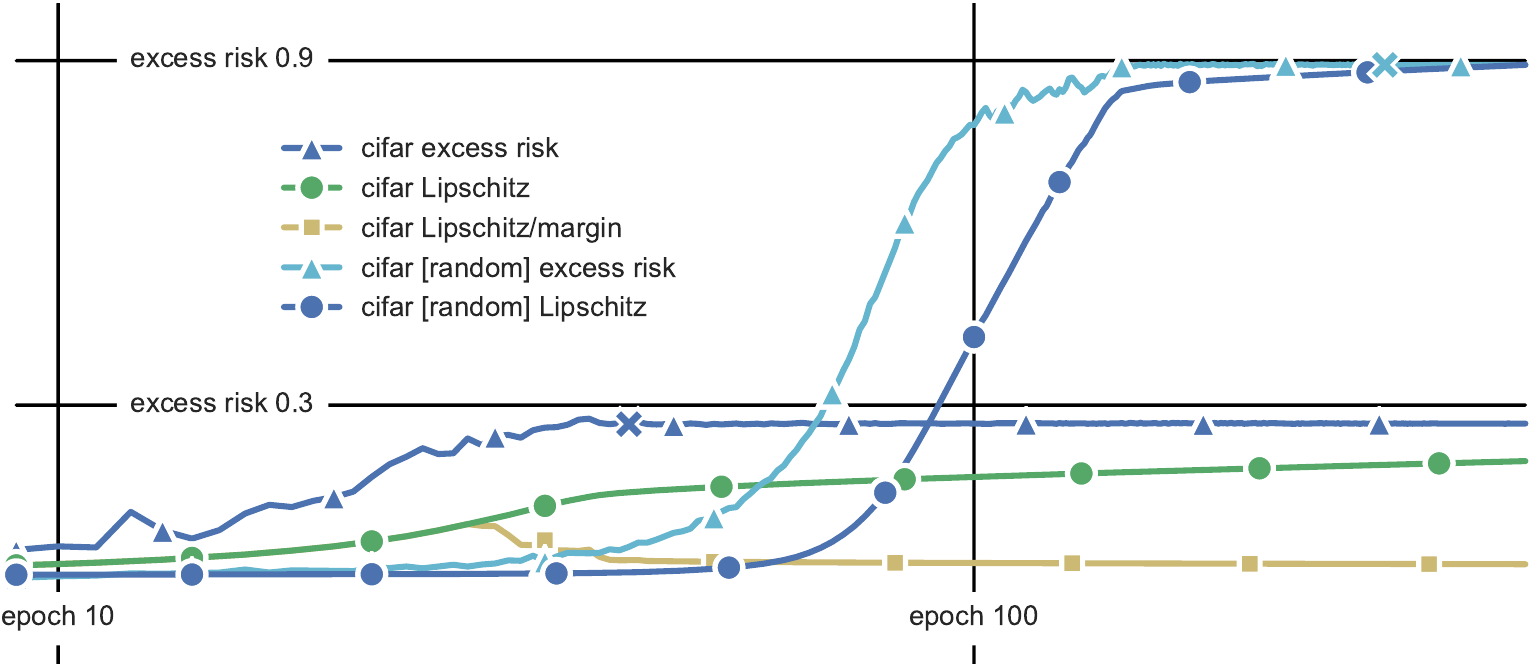}
  \caption{
    An analysis of AlexNet \citep{imagenet_sutskever} trained with SGD on \ciften, both with original and with random labels.
    Triangle-marked curves track excess risk across training epochs (on a log scale),
    with an `{\tt x}' marking the earliest epoch with zero training error.
    Circle-marked curves track Lipschitz constants,
    normalized so that the two curves for random labels meet.
    The Lipschitz constants tightly correlate with excess risk,
    and moreover normalizing them by \emph{margins} (resulting in the square-marked curve) neutralizes growth across epochs.
  \label{fig:intro:excess}}
\end{figure}

Next let's consider \emph{scale-sensitive} measures of complexity,
such as Rademacher complexity and covering numbers,
which (can) work directly with real-valued function classes,
and moreover are sensitive to their magnitudes.
\Cref{fig:intro:excess} plots the excess risk (the test error minus the training error)
across training epochs against one candidate scale-sensitive complexity measure, the Lipschitz constant of the network
(the product of the spectral norms of the weight matrices),
and demonstrates that they are
tightly correlated (which is not the case for, say, the $l_2$ norm of the weights).
The data considered in \Cref{fig:intro:excess} is the standard \ciften dataset, both with
original and with random labels, which has been used as a sanity check
when investigating
neural network generalization \citep{rethinking}.

There is still an issue with basing a complexity measure purely on the Lipschitz constant
(although it has already been successfully employed to regularize neural networks \citep{parseval_networks}):
as depicted in \Cref{fig:intro:excess}, the measure grows over time, despite the excess risk plateauing.
Fortunately, there is a standard resolution to this issue: investigating the \emph{margins}
(a precise measure of confidence) of the outputs of the network.
This tool has been used to study the behavior of 2-layer networks, boosting methods, SVMs,
and many others
\citep{bartlett_margin,boosting_margin,esaim_survey};
in boosting, for instance, there is a similar growth in complexity over time (each training iteration adds a weak learner),
whereas margin bounds correctly stay flat or even decrease.
This behavior is recovered here:
as depicted in \Cref{fig:intro:excess},
even though standard networks exhibit growing Lipschitz constants,
normalizing these Lipschitz constants by the margin instead gives a decaying curve.

\subsection{Contributions}

This work investigates a complexity measure for neural networks that is based on the Lipschitz constant,
but normalized by the margin of the predictor.
The two central contributions are as follows.
\begin{itemize}
  \item
    \Cref{fact:main:new} below will give the rigorous statement of the
    generalization bound that is the basis
    of this work.
    In contrast to prior work, this bound:
    \textbf{(a)} scales with the Lipschitz constant (product of spectral norms of weight matrices) divided by the margin;
    \textbf{(b)} has no dependence on combinatorial parameters (e.g., number of layers or nodes) outside of log factors;
    \textbf{(c)} is multiclass (with no explicit dependence on the number of classes);
    \textbf{(d)} measures complexity against a \emph{reference network} (e.g., for the ResNet \citep{resnet},
    the reference network has identity mappings at each layer).
    The bound is stated below, with a general form and analysis summary appearing in \Cref{sec:theory}
    and the full details relegated to the appendix.

  \item
    An empirical investigation, in \Cref{sec:empirical}, of neural network generalization on the standard datasets \ciften, \cifhun, and \mnist
    using the preceding bound.
    Rather than using the bound to provide a single number,
    it can be used to form a \emph{margin distribution}
    as in \Cref{fig:intro:margin_dists}.
    These margin distributions will illuminate the following intuitive observations:
    \textbf{(a)} \ciften is harder than \mnist;
    \textbf{(b)} random labels make \ciften and \mnist much more difficult;
    \textbf{(c)} the margin distributions (and bounds) converge during training, even though the weight matrices continue to grow;
    \textbf{(d)} $l_2$ regularization (``weight decay'') does not significantly impact margins or generalization.
\end{itemize}
A more detailed description of the margin distributions is as follows.  Suppose a neural
network computes a function $f:\R^d \to \R^k$, where $k$ is the number of classes;
the most natural way to convert this to a classifier is
to select the output coordinate with the largest magnitude, meaning $x \mapsto \argmax_j f(x)_j$.
The \emph{margin}, then, measures the gap between the output for the correct label and
other labels, meaning
$f(x)_y - \max_{j\neq y} f(x)_j$.

Unfortunately, margins alone do not seem to say much;
see for instance \Cref{fig:intro:margin_dists:unnorm}, where the collections
of all margins for all data points --- the \emph{unnormalized margin distribution} ---
are similar for \ciften with and without random labels.
What is missing is an appropriate \emph{normalization},
as in \Cref{fig:intro:margin_dists:norm}.
This normalization is provided by \Cref{fact:main:new},
which can now be explained in detail.

\begin{figure}[t]
  \centering
  \begin{subfigure}[b]{0.48\textwidth}
  \includegraphics[width = 1.0\textwidth]{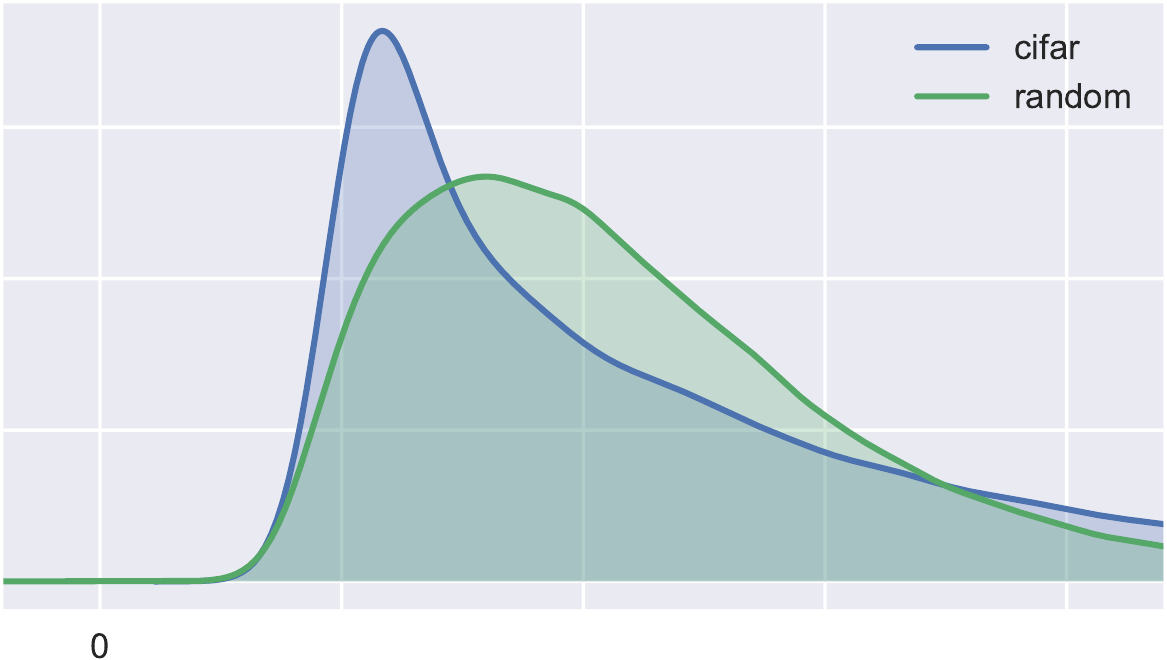}
    \caption{Margins.\label{fig:intro:margin_dists:unnorm}}
  \end{subfigure}
  \begin{subfigure}[b]{0.48\textwidth}
    \includegraphics[width = 1.0\textwidth]{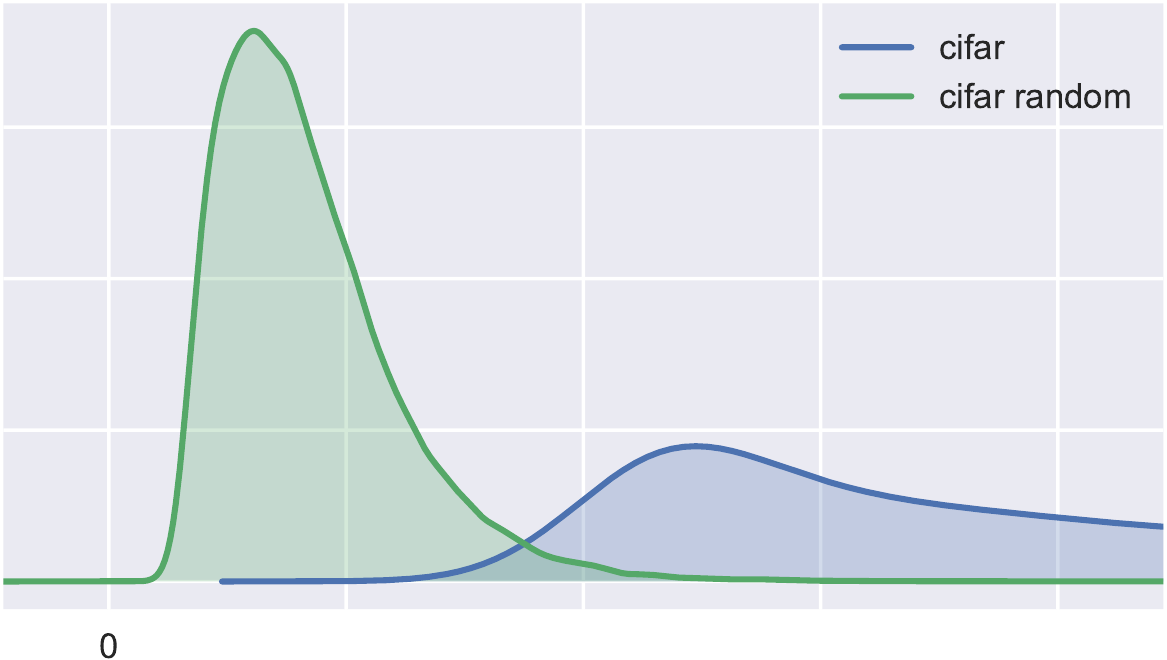}
    \caption{Normalized margins.\label{fig:intro:margin_dists:norm}}
  \end{subfigure}
  \caption{Margin distributions at the end of training AlexNet on \ciften,
  with and without random labels.
  With proper normalization, random labels demonstrably correspond to a harder
    problem.\label{fig:intro:margin_dists}}
\end{figure}

To state the bound, a little bit of notation is necessary.
The networks will use $L$ fixed nonlinearities $(\sigma_1,\ldots, \sigma_L)$,
where $\sigma_i:\R^{d_{i-1}}\to\R^{d_i}$ is $\rho_i$-Lipschitz
(e.g., as with coordinate-wise ReLU, and max-pooling, as discussed in \Cref{sec:app:prelim});
occasionally, it will also hold that $\sigma_i(0) = 0$.
Given $L$ weight matrices $\cA = (A_1,\ldots,A_L)$
let $F_\cA$ denote the function computed by the corresponding network:
\begin{equation}\label{eq:FA}
  F_\cA(x) := \sigma_L(A_L\sigma_{L-1}(A_{L-1} \cdots \sigma_1(A_1 x)\cdots)).
\end{equation}
The network output $F_\cA(x)\in\R^{d_L}$ (with $d_0=d$ and $d_L=k$)
is converted
to a class label in $\{1,\ldots,k\}$ by taking the $\arg\max$ over
components, with an arbitrary rule for breaking ties.
Whenever input data $x_1,\ldots,x_n\in\R^d$ are given,
collect them as rows of a matrix $X \in \R^{n\times d}$.
Occasionally, notation will be overloaded to discuss $F_\cA(X^T)$, a matrix whose $i^{\textup{th}}$
column is $F_\cA(x_i)$.
Let $W$ denote the maximum of $\{d,d_1,\ldots,d_L\}$.
The $l_2$ norm $\|\cdot\|_2$ is always computed entry-wise; thus,
for a matrix, it corresponds to the Frobenius norm.

Next, define a collection of \emph{reference matrices}
$(M_1,\ldots,M_L)$ with the same dimensions as $A_1,\ldots,A_L$; for instance,
to obtain a good bound for ResNet \citep{resnet}, it is sensible to set $M_i := I$, the identity map,
and the bound below will worsen as the network moves farther from the identity map;
for AlexNet \citep{imagenet_sutskever}, the simple choice $M_i=0$ suffices.
Finally,
let $\|\cdot\|_\sigma$ denote the spectral norm,
and let $\|\cdot\|_{p,q}$ denote the $(p,q)$ matrix norm, defined by
        $\nrm*{A}_{p,q} := \enVert{ (\|{}A_{:,1}\|_p, \ldots, \|{}A_{:,m} \|_p)}_q$ for $A\in\bbR^{d\times{}m}$.
The \emph{spectral complexity} $R_{F_\cA} = R_\cA$ of a network $F_\cA$ with weights $\cA$ is the defined as
\begin{equation}
  R_{\cA}
  :=
  \del{\prod_{i=1}^L \rho_i \|A_i\|_\sigma}
  \del{\sum_{i=1}^L \frac{ \|A_i^{\top} - M_i^{\top}\|_{2,1}^{2/3}}{\|A_i\|_\sigma^{2/3}}}^{3/2}.
  \label{eq:spec_comp}
\end{equation}

The following theorem provides a generalization bound for neural networks whose nonlinearities are
fixed but whose weight matrices $\cA$ have bounded spectral complexity $R_\cA$.

\begin{theorem}
  \label{fact:main:new}
  Let nonlinearities $(\sigma_1,\ldots,\sigma_L)$
  and
  reference matrices $(M_1,\ldots, M_L)$
  be given as above (i.e., $\sigma_i$ is $\rho_i$-Lipschitz and $\sigma_i(0) = 0$).
  Then for $(x,y),(x_1,y_1),\ldots,(x_n,y_n)$ drawn iid from any
  probability distribution over $\R^d\times\{1,\ldots,k\}$,
  with probability at least $1-\delta$ over $((x_i,y_i))_{i=1}^n$,
  every margin $\gamma > 0$
  and network $F_\cA : \R^d \to \R^k$ with weight matrices $\cA = (A_1,\ldots,A_L)$
  satisfy
  \begin{align*}
    \Pr\sbr[2]{ \argmax_j F_\cA(x)_j \neq y }
    \leq
    \hcR_\gamma(F_\cA)
    + \widetilde{\cO}
    \del{ \frac {\|X\|_2 R_\cA}{\gamma n}    \ln(W) + \sqrt{\frac {\ln(1/\delta)}{n}} },
   \end{align*}
  where $\hcR_\gamma(f) \leq n^{-1} \sum_i \1\sbr{ f(x_i)_{y_i} \leq \gamma + \max_{j\neq y_i} f(x_i)_j }$ and $\|X\|_2 = \sqrt{ \sum_i \|x_i\|_2^2 }$.
\end{theorem}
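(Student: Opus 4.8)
The plan is to follow the classical route from classification error to empirical margin risk to Rademacher complexity to covering numbers, with the genuinely new work concentrated in a \emph{dimension-free} covering bound for the network class.

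First, I would invoke the standard margin-based generalization framework. The $0/1$ classification error is dominated by a $1/\gamma$-Lipschitz ramp surrogate applied to the margin $F_\cA(x)_y - \max_{j\ne y} F_\cA(x)_j$, so it suffices to control the empirical Rademacher complexity of the induced scalar loss class on the sample. Peeling off the ramp and the (Lipschitz) multiclass margin operator via Talagrand/vector contraction reduces the problem to bounding $\Rad(\cF_\cA|_X)$, the Rademacher complexity of the raw outputs restricted to the data $X$. The additive $\sqrt{\ln(1/\delta)/n}$ term is the usual McDiarmid deviation, and the uniformity over all $\gamma>0$ is obtained by a routine discretization of the margin scale with a union bound folded into the $\widetilde{\cO}$.

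Second, and this is the crux, I would bound $\Rad(\cF_\cA|_X)$ through Dudley's entropy integral, which demands a covering-number bound for $\cF_\cA$ in the empirical $\ell_2$ metric on $X$. The engine is a Maurey-type matrix covering lemma: after applying a weight matrix to $X$, any $A$ with bounded $\|A^\top - M^\top\|_{2,1}$ can be approximated by a sparse surrogate obtained by sampling from its columns, yielding a covering number whose dependence on the ambient dimensions is only through a single $\ln(W)$ factor. This is exactly the step that keeps the bound combinatorics-free. The reference matrix $M_i$ enters precisely here: we cover the perturbation $A_i - M_i$, which leaves the covering cardinality unchanged but renders the complexity measure relative to the reference network.

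Third, I would assemble the whole-network cover by induction over layers. Writing $F_\cA$ as a composition of affine maps and $\rho_i$-Lipschitz nonlinearities (using $\sigma_i(0)=0$ to control magnitudes), an error introduced at layer $i$ propagates to the output amplified by $\prod_{j>i}\rho_j\|A_j\|_\sigma$; combining this with the per-layer matrix cover gives a total log-covering bound as a function of the per-layer accuracies $(\eps_1,\dots,\eps_L)$. Feeding this into Dudley's integral and minimizing over the allocation of a fixed total budget is where the spectral complexity $R_\cA$ crystallizes: the minimization of $\sum_i c_i/\eps_i^2$ subject to $\sum_i \beta_i\eps_i\le\eps$ is a Lagrange/Hölder computation whose stationary allocation sets $\eps_i\propto(c_i/\beta_i)^{1/3}$, and the resulting cube-root exponent is exactly the source of the $2/3$ powers inside the sum and the $3/2$ outside it in \eqref{eq:spec_comp}, with the global Lipschitz factor $\prod_i \rho_i\|A_i\|_\sigma$ emerging from the compounded amplification.

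I expect the main obstacle to be the clean interaction between these two ingredients: obtaining the log-only Maurey matrix cover, and then propagating it through the layerwise induction so that the amplification factors and the allocation optimization combine to produce precisely $R_\cA$ with no stray polynomial dependence on $L$ or $W$. Additional care is needed because the naive Dudley integral with $\ln N \sim (\|X\|_2 R_\cA/\eps)^2\ln(W)$ diverges logarithmically at the origin, so a suitable truncation (contributing the hidden polylog factors in $\widetilde{\cO}$) must be chosen, and one must verify that the optimal $\eps_i$ allocation is feasible. Once the Hölder step is set up correctly, the $2/3$ exponents and the $\|X\|_2 R_\cA\ln(W)/(\gamma n)$ scaling fall out, and the remaining manipulations are routine.
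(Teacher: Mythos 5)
Your proposal follows the same overall architecture as the paper's proof: the ramp/margin reduction to Rademacher complexity (\Cref{fact:margin_multiclass}), the Maurey sparsification matrix cover whose dimension dependence is a single $\ln(2dm)$ factor and in which the reference matrix enters by translating the cover (\Cref{fact:matrix_l21_covering}), the layer-by-layer induction in which an error committed at layer $i$ is amplified by $\prod_{j>i}\rho_j\|A_j\|_\sigma$ (\Cref{fact:cover:general}), the Lagrange/H\"older allocation of per-layer accuracies $\eps_i\propto(c_i/\beta_i)^{1/3}$ that produces the $2/3$ and $3/2$ exponents (this is exactly the choice of $\eps_i$ made in the proof of \Cref{fact:cover:spectral}), and a truncated Dudley integral (\Cref{lem:dudley}, with $\alpha=1/n$; your observation that $\sqrt{\ln\cN}\sim 1/\eps$ forces a truncation is correct). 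The one genuine methodological divergence is your first step: you contract away the ramp and margin operator and then cover the raw network outputs, whereas the paper never contracts --- it composes $\ell_\gamma\circ(-\mop)$ onto the network as one more ($2/\gamma$-Lipschitz wrt $\|\cdot\|_2$, by \Cref{fact:margin_lip}) layer, covers the resulting scalar loss class directly (\Cref{fact:main:old}), and only then applies Dudley. This is not cosmetic: scalar Talagrand contraction does not apply, since $\mop(\cdot,y):\R^k\to\R$, so you must invoke Maurer's vector-contraction inequality, which leaves you with a doubly-indexed average $\bbE\sup_{\cA}\sum_{i,j}\eps_{ij}F_\cA(x_i)_j$; if that quantity is then handled coordinate-wise, as in the prior multiclass work the paper cites, you pay exactly the $\sqrt{k}$ factor the theorem is designed to avoid. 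Your route can be made to work --- run Dudley on the coordinate-expanded class against the same Frobenius-norm cover of the whole output matrix $F_\cA(X^\top)$ from \Cref{fact:cover:spectral} --- but then contraction has bought nothing over the paper's simpler composition, at the cost of an extra inequality and worse constants.

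There is also a gap in how you obtain the stated uniformity. The covering argument requires a priori bounds $s_i$ on $\|A_i\|_\sigma$, $b_i$ on $\|A_i^\top-M_i^\top\|_{2,1}$, and $B$ on $\|X\|_2$, yet \Cref{fact:main:new} must hold simultaneously for \emph{every} network $\cA$ and every $\gamma>0$, with the actual norms appearing in $R_\cA$; a single function class with unbounded norms has unbounded complexity. You union-bound only over a discretization of the margin scale $\gamma$. The paper needs a full stratification (\Cref{fact:main:new:helper}) over a grid of all of $\gamma$, $\|X\|_2$, and the $2L$ per-layer norms, with $\delta$ split across the grid; this is where the additional logarithmic terms under the square root come from, and without it the argument only proves the fixed-constraint statement \Cref{fact:main:old}, not the theorem. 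The step is routine, but it is a necessary piece of the proof rather than something absorbed silently into the $\widetilde{\cO}$.
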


The full proof and a generalization beyond spectral norms is relegated to the appendix,
but a sketch is provided in \Cref{sec:theory},
along with a lower bound.
\Cref{sec:theory} also gives a discussion of related work:
briefly, it's essential
to note that margin and Lipschitz-sensitive bounds have a long history in the neural networks literature
\citep{bartlett_margin,anthony_bartlett_nn,neyshabur2015norm};
the distinction here is the sensitivity to the spectral norm,
and that there is no explicit appearance of combinatorial quantities such as numbers of parameters
or layers (outside of log terms, and indices to summations and products).

To close, miscellaneous observations and open problems are collected in \Cref{sec:open}.

\section{Generalization case studies via margin distributions}
\label{sec:empirical}

In this section, we empirically study the generalization behavior of
neural networks, via margin distributions and the generalization bound
stated in \Cref{fact:main:new}.

Before proceeding with the plots, it's a good time
to give a more refined description of the margin distribution, one that is
suitable for comparisons across datasets.
Given $n$ pattern/label pairs $((x_i,y_i))_{i=1}^n$, with patterns as
rows of matrix $X\in\R^{n\times d}$, and given a predictor
$F_\cA:\R^d\to\R^k$,
the (normalized) margin distribution is the univariate empirical
distribution of the labeled data points each transformed into a single scalar according to
\[
  (x,y) \mapsto \frac {F_\cA(x)_y - \max_{i\neq y}F_\cA(x)_i}{R_\cA \|X\|_{2}/n},
\]
where the spectral complexity $R_\cA$ is from \cref{eq:spec_comp}.
The normalization is thus derived from the bound in \Cref{fact:main:new},
but ignoring log terms.

\begin{figure}
  \centering
  \begin{subfigure}[t]{0.49\textwidth}
    \includegraphics[width = 1.0\textwidth]{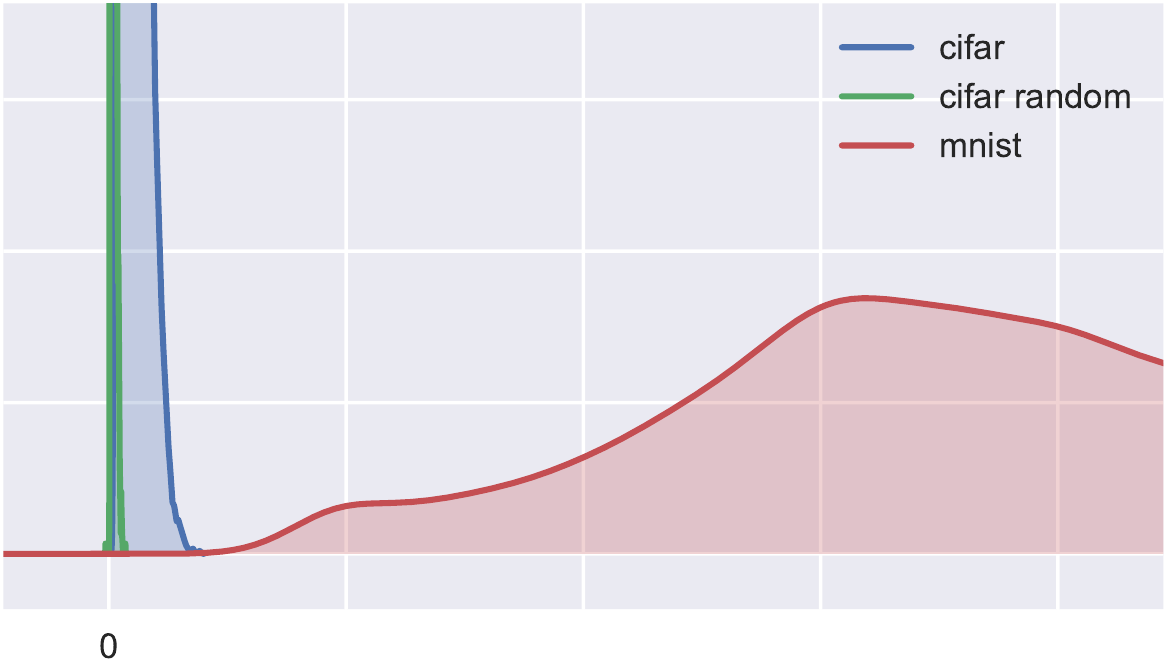}
    \caption{\mnist is easier than \ciften.\label{fig:mnist:1}}
  \end{subfigure}\hfill
  \begin{subfigure}[t]{0.49\textwidth}
    \includegraphics[width = 1.0\textwidth]{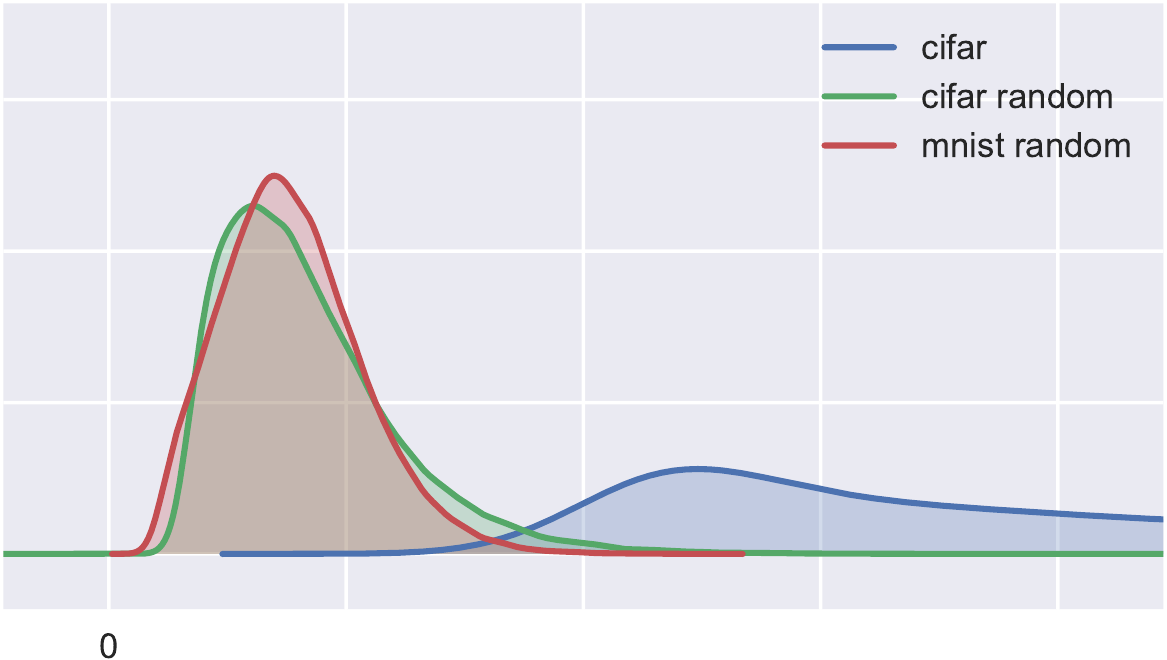}
    \caption{Random \mnist is as hard as random \ciften!\label{fig:mnist:2}}
  \end{subfigure}
  \vfill
  \begin{subfigure}[t]{0.49\textwidth}
    \includegraphics[width = 1.0\textwidth]{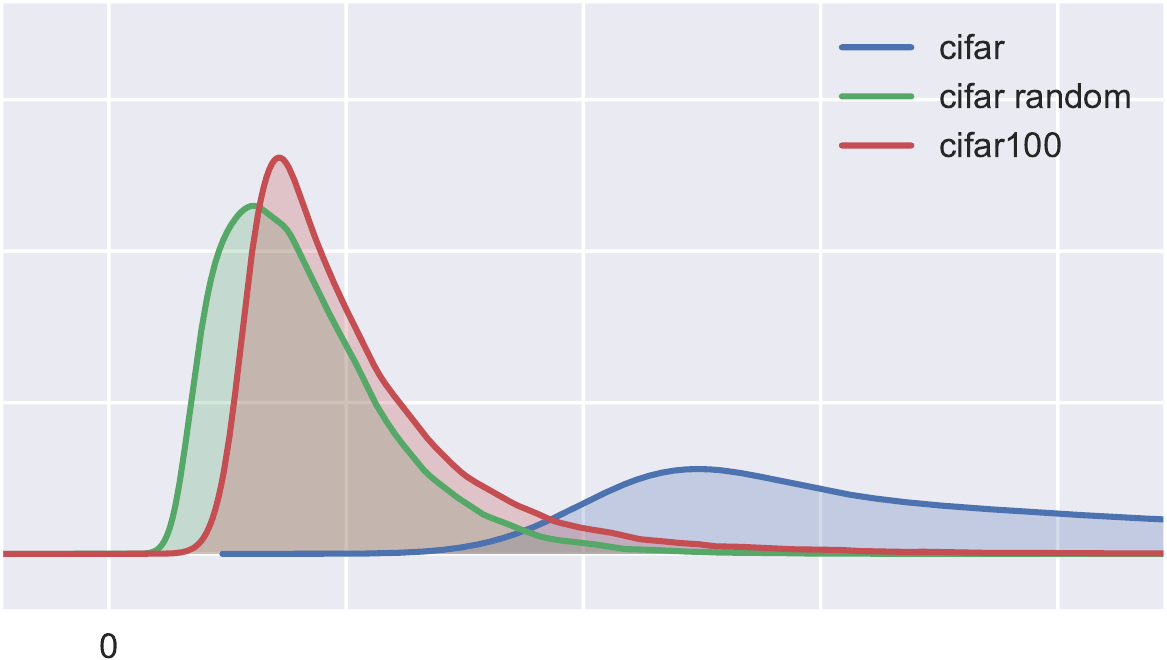}
    \caption{\cifhun is as hard as \ciften
    with random labels! \label{fig:cifar100}    }\end{subfigure}\hfill
    \begin{subfigure}[t]{0.49\textwidth}
    \includegraphics[width = 1.0\textwidth]{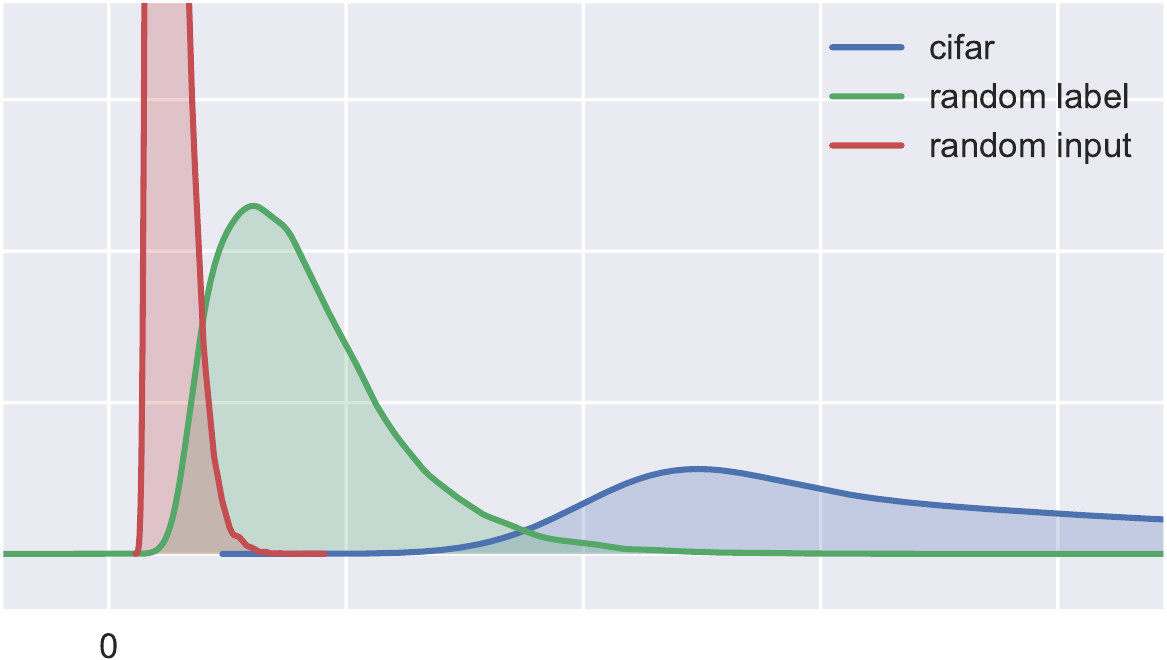}
    \caption{Random inputs are harder than random labels.\label{fig:randim}}
  \end{subfigure}
  \caption{A variety of margin distributions.  Axes are re-scaled in \Cref{fig:mnist:1},
  but identical in the other subplots;
  the \ciften (blue) and random \ciften (green) distributions are the same each time.}
\end{figure}

Taken this way, the two margin distributions for two datasets can
be interpreted as follows.  Considering any fixed point on the
horizontal axis, if the \emph{cumulative} distribution of one density
is lower than the other, then it corresponds to a lower right hand
side in \Cref{fact:main:new}.
For no reason other than visual interpretability,
the plots here will instead depict a density estimate of the margin distribution.
The vertical and horizontal axes are rescaled in different plots, but the random and
true \ciften margin distributions are always the same.

A little more detail about the experimental setup is as follows.
All experiments were implemented in Keras \citep{chollet2015keras}. In order to minimize conflating effects of optimization and regularization,
the optimization method was vanilla SGD with step size $0.01$,
and all regularization (weight decay, batch normalization, etc.)
were disabled.  ``\cifar'' in general refers to \ciften, however \cifhun will
also be explicitly mentioned.  The network architecture is essentially AlexNet
\citep{imagenet_sutskever}
with all normalization/regularization removed, and with
no adjustments of any kind (even to the learning rate) across the different experiments.

\textbf{Comparing datasets.}\quad
A first comparison is of \ciften and the standard \mnist digit data.  \mnist is
considered ``easy'', since any of a variety of
methods can achieve roughly 1\% test error.
The ``easiness'' is corroborated by \Cref{fig:mnist:1}, where the margin
distribution for \mnist places all its mass far to the right of the mass for
\ciften.  Interestingly, randomizing the labels of \mnist, as in \Cref{fig:mnist:2},
results in a margin distribution to the left of not only \ciften, but also slightly to the left of
(but close to) \ciften with randomized labels.

Next, \Cref{fig:cifar100} compares \ciften and \cifhun, where \cifhun uses the same input images as
\ciften; indeed, \ciften is obtained from \cifhun by collapsing the original 100
categories into 10 groups.  Interestingly, \cifhun, from the perspective of
margin bounds, is just as difficult as \ciften with random labels.  This is
consistent with the large observed test error on \cifhun (which has not been ``optimized'' in any
way via regularization).

Lastly, \Cref{fig:randim} replaces the \ciften \emph{input images} with random images
  sampled from Gaussians matching the first- and second-order image statistics
  (see \citep{rethinking} for similar experiments). 

\begin{figure}
  \centering
  \begin{subfigure}[b]{0.49\textwidth}
    \includegraphics[width = 1.0\textwidth]{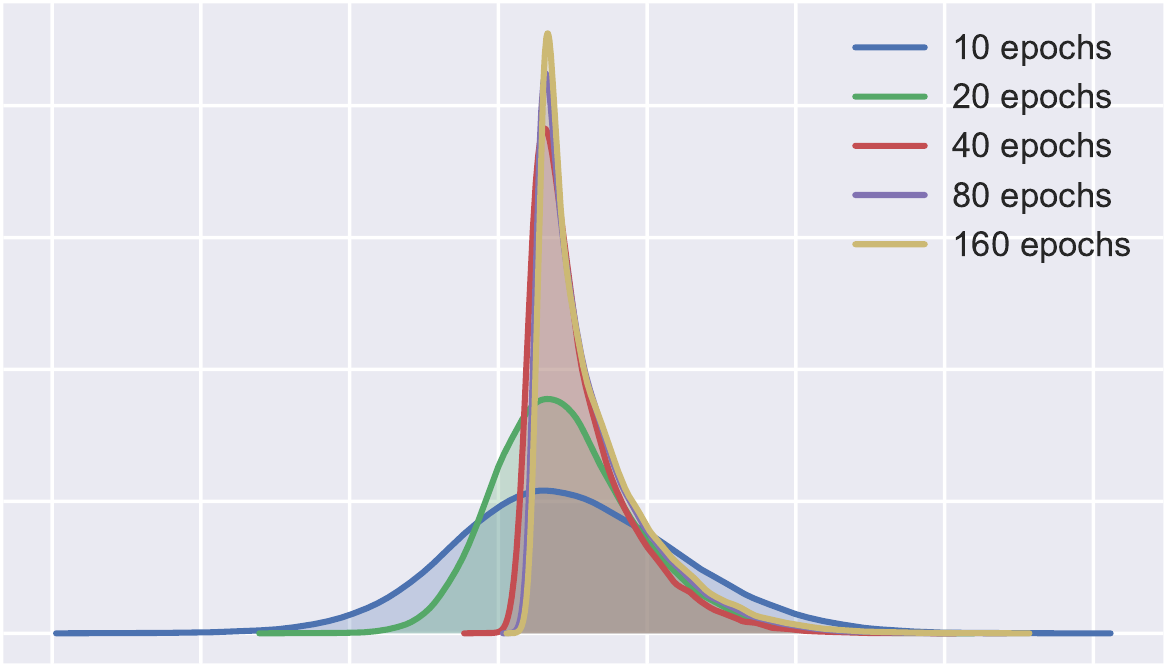}
    \caption{Margins across epochs for \ciften. \label{fig:margins_converge}}
  \end{subfigure}\hfill
  \begin{subfigure}[b]{0.49\textwidth}
    \includegraphics[width = 1.0\textwidth]{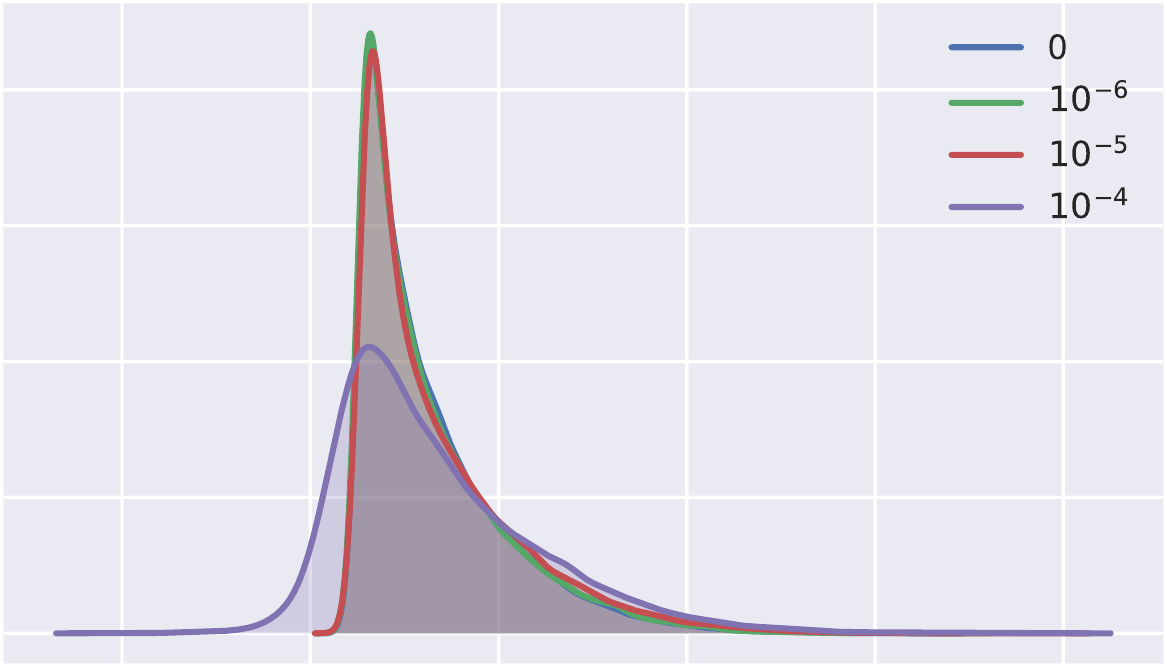}
    \caption{Various levels of $l_2$ regularization for \ciften. \label{fig:l2}}
  \end{subfigure}
  \caption{}
\end{figure}

\textbf{Convergence of margins.}\quad
As was pointed out in \Cref{sec:intro}, the weights of the neural networks
do not seem to converge in the usual sense during training (the norms grow continually).  However,
as depicted in \Cref{fig:margins_converge}, the sequence of (normalized)
margin distributions
is itself converging.

\textbf{Regularization.}\quad
As remarked in \citep{rethinking}, regularization only seems to bring minor
benefits to test error (though adequate to be employed in all cutting edge results).
  This observation is certainly consistent with the margin distributions in \Cref{fig:l2}, which
do not improve (e.g., by shifting to the right) in any visible way under regularization.  An open question,
discussed further in \Cref{sec:open}, is to design regularization that
improves margins.

\section{Analysis of margin bound}
\label{sec:theory}

This section will sketch the proof of \Cref{fact:main:new},
give a lower bound,
and discuss related work.

\subsection{Multiclass margin bound}
\label{sec:multiclass_margins}

The starting point of this analysis is a margin-based bound for multiclass prediction.
To state the bound, first recall that the \emph{margin operator}
$\mop:\R^k\times\{1,\ldots,k\}\to\R$ is defined as
$\displaystyle
  \mop(v,y) := v_y - \max_{i\neq y} v_i
$,
and define the \emph{ramp loss} $\ell_\gamma:\R\to\R^+$ as
\[
  \ell_\gamma(r) :=
  \begin{cases}
    0
    &r< -\gamma,
    \\
    1 + r/\gamma
    &r \in [-\gamma,0],
    \\
    1
    &
    r > 0,
  \end{cases}
\]
and \emph{ramp risk} as $\displaystyle \cR_\gamma(f) := \bbE(\ell_\gamma(-\mop(f(x),y)))$.
Given a sample $S:=((x_1,y_1),\ldots,(x_n,y_n))$,
define an empirical counterpart $\hcR_\gamma$ of $\cR_\gamma$ as
$\hcR_\gamma(f) := n^{-1} \sum_i \ell_\gamma(-\mop(f(x_i), y_i))$;
note that $\cR_\gamma$ and $\hcR_\gamma$ respectively upper bound the
probability and fraction of errors
on the source distribution and training set.
Lastly, given a set of real-valued functions $\cH$,
define the \emph{Rademacher complexity} as $\Rad(\cH_{|S}) := n^{-1}\bbE
\sup_{h \in \cH} \sum_{i=1}^n\eps_i h(x_i,y_i)$,
where the expectation is over the Rademacher random variables
$(\eps_1,\ldots,\eps_n)$,
which are iid with $\Pr[\eps_1 = 1] = \Pr[\eps_1 = -1] = \nicefrac 1 2$.

With this notation in place, the basic bound is as follows.

\begin{lemma}
  \label{fact:margin_multiclass}
   Given functions $\cF$ with $\cF\ni f : \R^d \to \R^k$ and any $\gamma>0$,
   define
   \[
     \cF_\gamma :=
     \cbr{
       (x,y) \mapsto \ell_\gamma(-\mop(f(x), y))
       :
       f\in \cF
     }.
   \]
   Then, with probability at least $1-\delta$
   over a sample $S$ of size $n$,
   every $f\in \cF$ satisfies
     \[
     \Pr[ \argmax_i f(x)_i \neq y ]
     \leq
     \hcR_\gamma(f)
     +
     2 \Rad((\cF_\gamma)_{|S})
     + 3 \sqrt{\frac{\ln(1/\delta)}{2n}}.
   \]
   \end{lemma}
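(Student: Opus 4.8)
The plan is to prove the lemma in two moves: a pointwise domination that reduces the $0/1$ classification error to the ramp loss, followed by the standard Rademacher-complexity uniform-convergence bound applied to the $[0,1]$-valued class $\cF_\gamma$.

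First I would establish the pointwise bound $\1[\argmax_i f(x)_i \neq y] \leq \ell_\gamma(-\mop(f(x),y))$. Indeed, if $\argmax_i f(x)_i \neq y$ then the maximizing coordinate $j \neq y$ satisfies $f(x)_j \geq f(x)_y$, so $\mop(f(x),y) = f(x)_y - \max_{i \neq y} f(x)_i \leq 0$; hence $-\mop(f(x),y) \geq 0$, and since $\ell_\gamma \equiv 1$ on $[0,\infty)$ (including $\ell_\gamma(0)=1$, which is exactly what makes the bound robust to the arbitrary tie-breaking rule) the loss equals $1$. When $\argmax_i f(x)_i = y$ the indicator is $0 \leq \ell_\gamma$. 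Taking expectations over $(x,y)$ gives $\Pr[\argmax_i f(x)_i \neq y] \leq \cR_\gamma(f)$, so it suffices to bound $\cR_\gamma(f)$ by $\hcR_\gamma(f)$ uniformly over $f \in \cF$.

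For the second move, I would note that every $g \in \cF_\gamma$ is valued in $[0,1]$ (because $\ell_\gamma$ is), and that $\cR_\gamma(f)$ and $\hcR_\gamma(f)$ are precisely the population and empirical means of the corresponding $g$. I would then invoke the textbook Rademacher deviation bound for bounded loss classes and chain it with the first move. Since only the definition of $\Rad$ --- not the bound itself --- is available above, I would recall its three ingredients for the deviation $\Phi(S) := \sup_{g \in \cF_\gamma}\del{\bbE g - n^{-1}\sum_i g(x_i,y_i)}$: (i) because each $g \in [0,1]$, altering one sample point changes $\Phi$ by at most $1/n$, so McDiarmid's bounded-differences inequality gives $\Phi(S) \leq \bbE_S \Phi(S) + \sqrt{\ln(1/\delta)/(2n)}$ with probability at least $1-\delta$; (ii) a symmetrization argument with a ghost sample and Rademacher signs $\eps_i$ bounds $\bbE_S \Phi(S) \leq 2\, \bbE_S \Rad((\cF_\gamma)_{|S})$; and (iii) a second McDiarmid application replaces the expected Rademacher complexity by the empirical one $\Rad((\cF_\gamma)_{|S})$ at the price of another $\sqrt{\ln(1/\delta)/(2n)}$, the two concentration terms combining into the stated constant $3$.

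The main obstacle is really only the bookkeeping of the first move: one must check that the ramp-loss composition dominates the misclassification indicator under the adversarial tie-breaking convention, and that it is confined to $[0,1]$ so that the bounded-differences constant is exactly $1/n$. Everything after that is the standard Rademacher machinery, and --- crucially --- no Lipschitz or contraction peeling of the network structure is performed at this stage; that work is deferred to the later, and genuinely harder, task of bounding $\Rad((\cF_\gamma)_{|S})$ for the specific neural-network class $\cF$.
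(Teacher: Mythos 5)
Your proposal is correct and follows essentially the same route as the paper: the paper's proof is exactly your first move (its \Cref{fact:margins_ramp}, bounding the misclassification probability by the ramp risk $\cR_\gamma(f)$ using $\ell_\gamma(r)=1$ for $r\geq 0$) followed by a citation of the standard bounded-class Rademacher bound (Theorem 3.1 of the cited Mohri et al.\ text), which is precisely the McDiarmid/symmetrization/McDiarmid argument you sketch. The only difference is that you unpack that textbook theorem instead of citing it; incidentally, both routes really yield $\ln(2/\delta)$ from the two-sided union bound rather than the stated $\ln(1/\delta)$, a constant-level slack that appears in the paper's own proof as well.
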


This bound is a direct consequence of standard tools in Rademacher complexity.
In order to instantiate this bound, covering numbers will be used to directly
upper bound the Rademacher complexity term $\Rad((\cF_\gamma)_{|S})$.
Interestingly, the choice of directly working in terms of covering numbers
seems essential to providing a bound with no explicit dependence on $k$;
by contrast, prior work primarily handles multiclass via a Rademacher
complexity analysis on each coordinate of a $k$-tuple of functions,
and pays a factor of $\sqrt{k}$
\citep{tz_multiclass_consistency}.

\subsection{Covering number complexity upper bounds}

This subsection proves
\Cref{fact:main:new} via \Cref{fact:margin_multiclass}
by controlling, via covering numbers,
the Rademacher complexity $\Rad((\cF_\gamma)_{|S})$
for networks with bounded spectral complexity.

The notation here for (proper) covering numbers is as follows.
Let $\cN(U, \eps, \|\cdot\|)$ denote the least cardinality of any
subset $V\subseteq U$
that \emph{covers} $U$ at scale $\eps$ with norm $\|\cdot\|$, meaning
\[
  \sup_{A\in U} \min_{B\in V} \|A-B\| \leq \eps.
\]
Choices of $U$ that will be used in the present work include
both the image $\cF_{|S}$ of data $S$ under some function class $\cF$,
as well as the conceptually simpler choice of a family of matrix
products.

The full proof has the following steps.
\textbf{(I)} A \emph{matrix covering} bound for the affine transformation
of each layer is provided in \Cref{fact:matrix_l21_covering};
handling whole layers at once allows for more flexible norms.
\textbf{(II)} An induction on layers then gives a covering number
bound for entire networks; this analysis is only sketched here for the
special case of norms used in \Cref{fact:main:new}, but the full proof
in the appendix culminates in a bound for more general norms
(cf. \Cref{fact:cover:general}).
\textbf{(III)} The preceding whole-network covering number
leads to \Cref{fact:main:new} via \Cref{fact:margin_multiclass}
and standard techniques.

Step \textbf{(I)}, \emph{matrix covering}, is handled by the following
lemma.  The covering number considers the matrix product $XA$, where $A$
will be instantiated as the weight matrix for a layer, and $X$ is the
data passed through all layers prior to the present layer.

\begin{lemma}
  \label{fact:matrix_l21_covering}
  Let conjugate exponents $(p,q)$ and $(r,s)$ be given with
  $p \leq 2$,
  as well as positive reals $(a,b,\eps)$ and positive integer $m$.
  Let matrix $X \in \R^{n\times d}$ be given with $\|X\|_{p} \leq b$.
  Then
    \[
    \ln \cN\del{\cbr{XA : A\in \R^{d\times m}, \|A\|_{q,s}\leq a}, \eps, \|\cdot\|_2}
    \leq
    \left\lceil \frac{a^2 b^2 m^{2/r}}{ \eps^2}\right \rceil \ln(2dm).
  \]
  \end{lemma}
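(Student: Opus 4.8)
The plan is to prove the bound by \emph{Maurey sparsification} (the empirical method of Maurey/Pisier): exhibit a \emph{fixed} dictionary of at most $2dm$ matrices such that every $XA$ in the target set is a sub-convex combination of them, and then approximate $XA$ by a $k$-term average with $k \approx a^2b^2m^{2/r}/\eps^2$. Since the covering norm $\|\cdot\|_2$ is the Frobenius norm, which is induced by the Hilbert--Schmidt inner product, the sparsification error is controlled by a variance computation.

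First I would reduce the across-column exponent to $1$. Writing $\alpha_j := \|A_{:,j}\|_q$, Hölder's inequality with the conjugate pair $(r,s)$ gives $\|A\|_{q,1} = \sum_j \alpha_j \le m^{1/r}\|(\alpha_j)_j\|_s = m^{1/r}\|A\|_{q,s} \le m^{1/r}a =: a'$. Thus it suffices to cover $\{XA : \|A\|_{q,1}\le a'\}$ with $k = \lceil a'^2 b^2/\eps^2\rceil$ atoms, since $a'^2 = m^{2/r}a^2$. Next I set up the dictionary: for $l \in [d]$, $j\in[m]$ and a sign $\varsigma\in\{\pm 1\}$, define $g_{l,j}^{\varsigma} := \varsigma\,\mu_l\, X_{:,l} e_j^\top \in \R^{n\times m}$, where $\mu_l := a' b/\|X_{:,l}\|_2$ (columns with $X_{:,l}=0$ are discarded). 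Crucially $\mu_l$ depends only on $X$ and the \emph{global} budget $a'$, not on $A$, so the dictionary is the same for every $A$. Given $A$ with $\|A\|_{q,1}\le a'$, set $\lambda_{l,j} := |A_{l,j}|/\mu_l \ge 0$ and $\varsigma_{l,j} := \operatorname{sign}(A_{l,j})$, so that $XA = \sum_{l,j}\lambda_{l,j}\, g_{l,j}^{\varsigma_{l,j}}$.

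Two Hölder estimates do the work. First, $\sum_{l}|A_{l,j}|\,\|X_{:,l}\|_2 \le \alpha_j\,(\sum_l \|X_{:,l}\|_2^p)^{1/p} \le \alpha_j b$, using conjugacy of $(p,q)$ together with $p\le 2$, which gives $\|X_{:,l}\|_2 \le \|X_{:,l}\|_p$ and hence $(\sum_l\|X_{:,l}\|_2^p)^{1/p}\le\|X\|_p\le b$. Summing over $j$, $\sum_{l,j}\lambda_{l,j} = (a'b)^{-1}\sum_{l,j}|A_{l,j}|\,\|X_{:,l}\|_2 \le (a'b)^{-1} b\sum_j\alpha_j \le 1$, so the combination is sub-convex. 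Second, $\|g_{l,j}^{\varsigma}\|_2^2 = \mu_l^2\|X_{:,l}\|_2^2$, whence $\sum_{l,j}\lambda_{l,j}\,\|g_{l,j}^{\varsigma_{l,j}}\|_2^2 = a'b\sum_{l,j}|A_{l,j}|\,\|X_{:,l}\|_2 \le a'b\cdot b\,a' = a'^2b^2$. Now draw $Y_1,\ldots,Y_k$ i.i.d., each equal to $g_{l,j}^{\varsigma_{l,j}}$ with probability $\lambda_{l,j}$ and equal to $0$ otherwise, so $\bbE Y_t = XA$. Because $\|\cdot\|_2$ comes from an inner product, $\bbE\,\|XA - \tfrac1k\sum_t Y_t\|_2^2 = \tfrac1k\bigl(\bbE\|Y_1\|_2^2 - \|XA\|_2^2\bigr) \le \tfrac1k\sum_{l,j}\lambda_{l,j}\|g_{l,j}^{\varsigma_{l,j}}\|_2^2 \le a'^2b^2/k$. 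Choosing $k = \lceil a'^2 b^2/\eps^2\rceil = \lceil a^2b^2m^{2/r}/\eps^2\rceil$ makes this at most $\eps^2$, so some realization achieves $\|XA - \tfrac1k\sum_t Y_t\|_2\le\eps$. These approximants are $k$-term averages over the fixed $2dm$-atom dictionary and form an $\eps$-cover of cardinality at most $(2dm)^k$ (the zero draws only lower the count), giving $\ln\cN \le k\ln(2dm)$ as claimed.

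The main obstacle is the construction in the second step: making the dictionary \emph{fixed} (independent of $A$) while keeping the variance dimension-free. A per-column or per-$A$ scaling $\mu\propto 1/\|X_{:,l}\|_2$ would create an uncountable family of atoms and hence no finite cover; tying the scale to the global budget $a'=m^{1/r}a$ is what simultaneously reconciles the sub-convexity constraint $\sum\lambda_{l,j}\le1$ with the variance bound $\sum\lambda_{l,j}\|g\|_2^2\le a'^2b^2$ \emph{through the same Hölder inequality}. This is precisely where the hypotheses $p\le2$, the conjugacy of $(p,q)$, and the reduction via $(r,s)$ all enter.
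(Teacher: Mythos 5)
Your proposal follows essentially the same route as the paper's proof: Maurey sparsification in the Frobenius (Hilbert--Schmidt) norm over a fixed, sign-symmetric dictionary of $2dm$ rank-one matrices built from rescaled columns of $X$, with H\"older applied twice --- across the columns of $A$ with the pair $(r,s)$, and down the columns with $(p,q)$, the hypothesis $p\le 2$ converting $\ell_p$ control of the columns of $X$ into $\ell_2$ control. The paper merges your two H\"older steps into a single matrix H\"older inequality (via a rescaling matrix $\alpha$ with $\|\alpha\|_{p,r}=m^{1/r}\|X\|_p$) and normalizes its atoms by $\|X_{:,l}\|_p$ rather than $\|X_{:,l}\|_2$, but these differences are cosmetic.

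There is, however, one genuine flaw, in the final counting step. Handling the sub-convexity deficit by letting $Y_t=0$ with probability $1-\sum_{l,j}\lambda_{l,j}$ enlarges the effective dictionary to $2dm+1$ elements, so the set of possible realizations of $\tfrac1k\sum_t Y_t$ --- which is what your cover must contain in order for the probabilistic argument to produce a cover element --- has cardinality up to $(2dm+1)^k$, not $(2dm)^k$. The parenthetical claim that ``the zero draws only lower the count'' is false: for $k=1$ (a legitimate case, arising when $\eps$ is large) your cover is $\{0\}\cup\{g_{l,j}^{\varsigma}\}$, which has $2dm+1$ elements, and in general your argument only yields $\ln\cN \le k\ln(2dm+1)$, strictly weaker than the stated bound. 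The repair is immediate, and is implicitly what the paper's Maurey lemma accomplishes by working with exact convex combinations scaled by the fixed budget: since your dictionary is sign-symmetric, assign the deficit mass $\tfrac12\bigl(1-\sum_{l,j}\lambda_{l,j}\bigr)$ to each member of one opposite-sign pair, say $g_{1,1}^{+}$ and $g_{1,1}^{-}$. The mean is still $XA$ (the pair cancels), every draw now lies in the $2dm$-atom dictionary, and since every atom has Frobenius norm exactly $a'b$, the variance bound $\bbE\|Y_1\|_2^2\le (a'b)^2$ is unchanged; this restores the cardinality $(2dm)^k$ and hence the claimed $\ln(2dm)$ factor.
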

The proof relies upon the
\emph{Maurey sparsification lemma} \citep{pisier1980remarques},
which is stated in terms of sparsifying convex hulls,
and in its use here is inspired by covering number bounds for linear predictors \citep{tz_covering}.
To prove \Cref{fact:main:new},
this matrix covering bound
will be instantiated for the case of $\nrm*{A}_{2,1}$.
It is possible to instead scale with $\|A\|_2$ and $\|X\|_2$,
but even for the case of the identity matrix $X=I$,
this incurs an extra dimension factor.
The use of $\|A\|_{2,1}$ here thus helps \Cref{fact:main:new}
avoid any appearance of $W$ and $L$ outside of log terms;
indeed, the goal of covering a whole matrix at a time (rather than
the more standard vector covering) was to allow this greater
sensitivity and avoid combinatorial parameters.

Step \textbf{(II)}, the induction on layers, proceeds as follows.
Let $X_i$ denote the output of layer $i$ but with images of examples of columns (thus $X_0 = X^\top$),
and inductively suppose there exists a cover element $\widehat X_i$ for $X_i$
which depends on covering matrices $(\widehat A_1,\ldots,\widehat A_{i-1})$
chosen to cover weight matrices in earlier layers.
Thanks to \Cref{fact:matrix_l21_covering},
there also exists $\widehat A_i$
so that $\|A_i \widehat X_i - \widehat A_i \widehat X_i \|_2 \leq \eps_i$.
The desired cover element is thus
$\widehat X_{i+1} = \sigma_i(\widehat A_i \widehat X_i)$ where $\sigma_i$ is the
nonlinearity in layer $i$;
indeed, supposing $\sigma_i$ is $\rho_i$-Lipschitz,
  \begin{align*}
    \|X_{i+1} - \widehat X_{i+1}\|_2
    &\leq \rho_i \|A_iX_i - \widehat A_i \widehat X_i\|_2
    \\
    &\leq \rho_i
    \del{ \|A_iX_i - A_i\widehat X_i\|_2 + \|A_i\widehat X_i - \widehat A_i \widehat X_i\|_2}
    \\
    &
    \leq
    \rho_i \|A_i\|_\sigma
    \|X_i - \widehat X_i\|_2
    + \rho_i \eps_i,
  \end{align*}
  where the first term is controlled with the inductive hypothesis.
  Since $\widehat X_{i+1}$
  depends on each choice $(\widehat A_i,\ldots,\widehat A_i)$,
  the cardinality of the full network cover is the product of
  the individual matrix covers.

  The preceding proof had no sensitivity to the particular choice
  of norms; it merely required an operator norm on $A_i$,
  as well as some other norm that allows matrix covering.
  Such an analysis is presented in full generality in
  \Cref{sec:cover:general}.
  Specializing to the particular case of spectral norms and
  $(2,1)$ group norms leads to the following full-network
  covering bound.

\begin{theorem}
  \label{fact:cover:spectral}
  Let fixed nonlinearities $(\sigma_1,\ldots,\sigma_L)$
  and
  reference matrices $(M_1,\ldots, M_L)$
  be given, where $\sigma_i$ is $\rho_i$-Lipschitz and $\sigma_i(0)=0$.
  Let spectral norm bounds $(s_1,\ldots,s_L)$,
  and matrix $(2,1)$ norm bounds $(b_1,\ldots,b_L)$ be given.
  Let data matrix $X\in\R^{n\times d}$ be given,
  where the $n$ rows correspond to data points.
  Let $\cH_X$ denote the family of matrices obtained by
  evaluating $X$ with all choices of network $F_\cA$:
  \[
    \cH_X := \cbr{
      F_\cA(X^T)\ :\ \cA = (A_1,\ldots,A_L),
      \ \|A_i\|_\sigma \leq s_i,
      \ \|A_i^{\top} - M_i^{\top}\|_{2,1} \leq b_i
    },
  \]
  where each matrix has dimension at most $W$ along each axis.
  Then for any $\eps>0$,
     \[
    \ln \cN(\cH_X, \eps, \|\cdot\|_2)
    \leq
    \frac {\|X\|^2_2 \ln(2W^2)}{\eps^2}
    \del{
      \prod_{j=1}^L s_j^2\rho_j^2
    }
    \del{
      \sum_{i=1}^L \del{\frac {b_i}{s_i}}^{2/3}
    }^3.
  \]
 \end{theorem}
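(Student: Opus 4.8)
The plan is to prove this whole-network covering bound by inducting over the layers, using the matrix-covering Lemma~\ref{fact:matrix_l21_covering} to cover one affine map at a time and then chaining the per-layer approximation errors through the Lipschitz recursion already displayed above. Writing $X_0 := X^\top$ and $X_i := \sigma_i(A_i X_{i-1})$ for the layer outputs (examples as columns), I would build a cover element $\widehat X_i$ for each $X_i$ out of cover elements $(\widehat A_1,\ldots,\widehat A_i)$ for the first $i$ weight matrices. The whole-network cover is then indexed by the tuples of per-layer cover elements, so its cardinality is the product of the per-layer matrix-cover cardinalities and its logarithm is the sum of the per-layer log-cardinalities; the final bound comes from choosing the per-layer covering resolutions $\eps_i$ to balance this sum against the total error budget $\eps$.

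For the per-layer step I would invoke Lemma~\ref{fact:matrix_l21_covering} with the data matrix taken to be $\widehat X_{i-1}^\top$ and the ``$A$'' taken to be $A_i^\top - M_i^\top$ (covering the deviation from the reference matrix, then shifting by the fixed $\widehat X_{i-1}^\top M_i^\top$). Choosing the conjugate pairs $(p,q)=(2,2)$ and $(r,s)=(\infty,1)$ makes $\|A_i^\top - M_i^\top\|_{2,1}$ the controlled quantity and, crucially, sets the width factor $m^{2/r}=m^0=1$ — this is exactly what keeps $W$ out of the bound except inside the logarithm. The result is, for each fixed $\widehat X_{i-1}$, an $\widehat A_i$ with $\|A_i\widehat X_{i-1} - \widehat A_i \widehat X_{i-1}\|_2 \le \eps_i$ and per-layer log-cardinality at most $(b_i^2 \|\widehat X_{i-1}\|_2^2/\eps_i^2)\ln(2 d_{i-1} d_i) \le (b_i^2\|\widehat X_{i-1}\|_2^2/\eps_i^2)\ln(2W^2)$. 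To use this uniformly I need a bound on $\|\widehat X_{i-1}\|_2$ independent of the particular cover element: using $\sigma_i(0)=0$ and $\rho_i$-Lipschitzness gives $\|X_{i-1}\|_2 \le \|X\|_2 P_{i-1}$ with $P_i := \prod_{j\le i}\rho_j s_j$, and the same recursion (carrying the accumulated error) bounds $\|\widehat X_{i-1}\|_2$ by essentially $\|X\|_2 P_{i-1}$.

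Assembling the total log-cardinality as $\ln(2W^2)\sum_{i=1}^L b_i^2\|X\|_2^2 P_{i-1}^2/\eps_i^2$, subject to the error constraint $\sum_{i=1}^L \rho_i\eps_i\prod_{j>i}\rho_j s_j \le \eps$ obtained by unrolling the recursion $\Delta_i \le \rho_i s_i\Delta_{i-1} + \rho_i\eps_i$ from $\Delta_0 = 0$, the remaining work is to optimize over $(\eps_1,\ldots,\eps_L)$. Writing $u_i := b_i^2 P_{i-1}^2$ and $c_i := \rho_i\prod_{j>i}\rho_j s_j = P_L/(P_{i-1}s_i)$, this is a standard Lagrange (equivalently H\"older) minimization of $\sum_i u_i\eps_i^{-2}$ subject to $\sum_i c_i\eps_i = \eps$, whose optimum is $\eps_i \propto (u_i/c_i)^{1/3}$ with value $\eps^{-2}(\sum_i u_i^{1/3}c_i^{2/3})^3$. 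The telescoping is the payoff: $u_i^{1/3}c_i^{2/3} = P_L^{2/3}(b_i/s_i)^{2/3}$, so the sum equals $P_L^{2/3}\sum_i (b_i/s_i)^{2/3}$ and its cube contributes $P_L^2(\sum_i (b_i/s_i)^{2/3})^3 = (\prod_j \rho_j^2 s_j^2)(\sum_i (b_i/s_i)^{2/3})^3$, giving exactly the claimed bound after multiplying by $\|X\|_2^2\ln(2W^2)/\eps^2$.

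I expect the main obstacle to be the bookkeeping around the cover-element norms $\|\widehat X_{i-1}\|_2$: since the matrix-covering lemma's data bound is applied to the approximate (covered) previous layer rather than the true $X_{i-1}$, one must argue a uniform bound over all cover elements feeding into layer $i$, which requires carrying the accumulated approximation error alongside the signal norm $\|X\|_2 P_{i-1}$ in the induction and checking it can be absorbed. The second delicate point is purely organizational — verifying that the product-of-covers construction correctly yields a \emph{sum} of log-cardinalities and that the per-layer errors combine through the Lipschitz recursion with the coefficients $c_i$ — but the genuinely load-bearing idea is the resolution-allocation optimization, which is where the $2/3$ exponents and the $3/2$-power structure of the spectral complexity $R_\cA$ come from.
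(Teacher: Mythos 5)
Your proposal follows the paper's proof essentially step for step in its computational skeleton: the paper also instantiates \Cref{fact:matrix_l21_covering} with $p=2$, $s=1$ (hence $q=2$, $r=\infty$, which kills the width factor $m^{2/r}$), also centers each layer's cover at the reference matrix by translating by $F_{(A_1,\ldots,A_{i-1})}(X^\top)^\top M_i^\top$, also multiplies per-layer cover cardinalities through an induction with the error recursion $\Delta_i \le \rho_i s_i \Delta_{i-1} + \rho_i \eps_i$, and its resolution choice $\eps_i = \alpha_i \eps/(\rho_i \prod_{j>i}\rho_j s_j)$ with $\alpha_i \propto (b_i/s_i)^{2/3}$ is exactly the optimizer your Lagrange/H\"older computation produces; your telescoping identity $u_i^{1/3}c_i^{2/3} = P_L^{2/3}(b_i/s_i)^{2/3}$ is the same cancellation that yields the $\bigl(\sum_i (b_i/s_i)^{2/3}\bigr)^3$ factor in the paper.

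The genuine gap is the step you yourself flagged, and your proposed repair (carry the accumulated error and absorb it) does not recover the stated bound. The paper's fix is different and is the one load-bearing idea missing from your write-up: \emph{properness} of covers. Since $\cN$ is defined with $V \subseteq U$, the induction in \Cref{fact:cover:general} can insist that every cover element of the layer-$(i-1)$ output set is itself of the form $F_{(A_1',\ldots,A_{i-1}')}(X^\top)$ for some admissible $A_j' \in \cB_j$ --- a true forward image under an admissible prefix network, not an arbitrary approximant. Consequently the matrix-covering lemma at layer $i$ is always applied with data matrix a genuine forward image, and $\|\widehat X_{i-1}\|_2 \le \|X\|_2 \prod_{j<i}\rho_j s_j$ holds \emph{exactly}, with no error term; this is precisely why the right-hand side of \Cref{fact:cover:general} is a supremum over true prefixes $(A_1,\ldots,A_{i-1}) \in \cB_1 \times \cdots \times \cB_{i-1}$. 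If you instead carry the error, then with your own allocation one gets $\Delta_{i-1} \le \eps\bigl(\sum_{j<i}\alpha_j\bigr)/\prod_{l\ge i}\rho_l s_l$, and the resulting inflation of the layer-$i$ entropy, $b_i^2 \Delta_{i-1}^2/\eps_i^2 \le (b_i/s_i)^{2/3}\bar\alpha^2$, is \emph{independent of $\eps$}; summed over layers it contributes an additive $\ln(2W^2)\bigl(\sum_i (b_i/s_i)^{2/3}\bigr)^3$ that is simply not present in the claimed bound (which vanishes as $\eps \to \infty$). One can rescue your route by splitting on whether $\eps$ exceeds the diameter $\|X\|_2\prod_j \rho_j s_j$, but that only proves the theorem up to a universal constant factor, not as stated. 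Replace the error-carrying by the properness argument (equivalently, define $\widehat X_i$ via admissible matrices rather than raw Maurey points) and the rest of your argument goes through verbatim.
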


What remains is \textbf{(III)}:
\Cref{fact:cover:spectral} can be combined with the standard Dudley entropy
integral upper bound on Rademacher complexity (see e.g. \cite{mohri_book}), which together with
\Cref{fact:margin_multiclass} gives \Cref{fact:main:new}.

\subsection{Rademacher complexity lower bounds}

By reduction to the linear case (i.e., removing all nonlinearities),
it is easy to provide a lower bound on the Rademacher complexity of the
networks studied here.  Unfortunately, this bound only scales with the product of spectral
norms, and not the other terms in $R_\cA$ (cf. \cref{eq:spec_comp}).

\begin{theorem}
  \label{fact:spectral_lb}
  Consider the setting of \Cref{fact:cover:spectral},
  but all nonlinearities are the ReLU $z\mapsto \max\{0,z\}$,
  the output dimension is $d_L=1$,
  and all non-output dimensions are at least 2 (and hence $W\geq 2$).
  Let data $S:= (x_1,\ldots,x_n)$ be collected into data matrix $X \in \R^{n\times d}$.
  Then there is a $c$ such that for any scalar $r>0$,
  \begin{equation}
    \Rad\del[3]{
      \cbr[2]{ F_\cA : \cA = (A_1,\ldots,A_L),\ \prod_i \|A_i\|_\sigma \leq r }_{|S}
    } \geq c\nrm*{X}_{2}r.
  \end{equation}
\end{theorem}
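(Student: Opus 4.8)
The plan is to prove the lower bound by restricting attention to a small sub-family of the networks appearing in \Cref{fact:spectral_lb} and invoking monotonicity of $\Rad$ under inclusion. Writing $\phi(z) := \max\{0,z\}$ for the ReLU, the target sub-family is
\[
  \cG := \cbr{ x \mapsto \kappa_L\, r\, \phi(\langle w, x\rangle) : \|w\|_2 \le 1 }
\]
for a constant $\kappa_L \in (0,1]$, so that it suffices to lower-bound $\Rad(\cG_{|S})$. The construction is the standard sign-splitting trick, and it is exactly here that the hypothesis $d_i \ge 2$ is used: the scalar signal $t = \langle w, x\rangle$ is carried through the hidden ReLU layers in the encoding $(\phi(t), \phi(-t))$, on which each interior ReLU acts as the identity (both entries being nonnegative).

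Concretely, I would take the first weight matrix to have top two rows $\alpha w^\top$ and $-\alpha w^\top$ (all other entries zero), so that after the first ReLU the hidden state is $(\alpha\phi(t), \alpha\phi(-t), 0, \dots)$ and $\|A_1\|_\sigma = \alpha\sqrt 2$; each interior matrix $A_2, \dots, A_{L-1}$ is the $2\times 2$ identity padded with zeros, which has $\|A_i\|_\sigma = 1$ and leaves the nonnegative pair fixed under ReLU; and the output row $A_L = (\beta, -\beta, 0, \dots)$, with $\|A_L\|_\sigma = \beta\sqrt 2$, forms the difference $\alpha\beta(\phi(t) - \phi(-t)) = \alpha\beta\, t$. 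The final nonlinearity then produces $F_\cA(x) = \phi(\alpha\beta\langle w,x\rangle) = \alpha\beta\,\phi(\langle w,x\rangle)$. The product of spectral norms is $2\alpha\beta$, so setting $\alpha\beta = r/2$ keeps $\cA$ feasible and realizes every element of $\cG$ with $\kappa_L = 1/2$ (for $L=1$ one gets $\kappa_L = 1$ outright, since $F_\cA(x) = \phi(\langle A_1^\top, x\rangle)$ already).

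It then remains to show $\bbE_\eps \sup_{\|w\|_2 \le 1} \sum_i \eps_i \phi(\langle w, x_i\rangle) \gtrsim \bbE_\eps \|\sum_i \eps_i x_i\|_2$, after which Khintchine's inequality supplies $\bbE_\eps \|\sum_i \eps_i x_i\|_2 \ge c_0 (\sum_i \|x_i\|_2^2)^{1/2} = c_0 \|X\|_2$ for a universal $c_0 > 0$. The reduction to a linear functional exploits the symmetry of the Rademacher distribution: since $\eps$ and $-\eps$ are equidistributed,
\[
  2\,\bbE_\eps \sup_{\|w\|_2 \le 1} \sum_i \eps_i \phi(\langle w, x_i\rangle)
  = \bbE_\eps \Bigl[ \sup_{w} \sum_i \eps_i \phi(\langle w, x_i\rangle) + \sup_{w'} \sum_i (-\eps_i) \phi(\langle w', x_i\rangle) \Bigr].
\]
For each fixed $\eps$ I would choose $w' = -w$ and use the identity $\phi(z) - \phi(-z) = z$, so that the bracketed quantity is at least $\sum_i \eps_i \langle w, x_i\rangle$ for every $w$; maximizing over $\|w\|_2 \le 1$ (i.e. taking $w$ proportional to $\sum_i \eps_i x_i$) yields $\|\sum_i \eps_i x_i\|_2$. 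This gives $\bbE_\eps \sup_w \sum_i \eps_i \phi(\langle w, x_i\rangle) \ge \tfrac12 \bbE_\eps \|\sum_i \eps_i x_i\|_2$, and chaining the estimates (with the $n^{-1}$ from the definition of $\Rad$) produces $\Rad(\cG_{|S}) \ge \tfrac{c_0}{4}\, n^{-1} \|X\|_2\, r$, which is \Cref{fact:spectral_lb} with $c = c_0/(4n)$ — the quantifier ``there is a $c$ such that for any $r>0$'' permits $c$ to depend on the fixed sample, and the content is the exact linear scaling in $r$ matching the $R_\cA$-dependence of \Cref{fact:main:new}.

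The one genuine obstacle is the output ReLU: because $\sigma_L$ is itself a ReLU, the construction yields $\phi(\langle w, x\rangle)$ rather than a genuinely linear map, so the naive ``removal of all nonlinearities'' is not literally available at the last layer. The pairing-and-reflection argument above is precisely what strips off this remaining nonlinearity in expectation; everything else (the sign-split encoding through interior layers, the spectral-norm bookkeeping, and the Khintchine step) is routine.
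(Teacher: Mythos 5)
Your proposal is correct, and its core is the same as the paper's: the identical two-coordinate sign-splitting construction (first layer proportional to $(\bfe_1-\bfe_2)w^\top$, padded-identity interior layers acting as the identity on the nonnegative pair $(\phi(t),\phi(-t))$, and a $(\beta,-\beta)$ output row), the same spectral-norm bookkeeping losing a factor of $2$, and the same final appeal to Khintchine--Kahane to get $\bbE\|\sum_i \eps_i x_i\|_2 \geq c_0\|X\|_2$. The genuine difference is your treatment of the output nonlinearity. The paper's proof quietly defines its class as $\cbr{A_L\sigma_{L-1}(\cdots\sigma_1(A_1x)) : \prod_i\|A_i\|_\sigma\leq r}$, i.e.\ with a \emph{linear} last layer, so its construction realizes $x\mapsto\langle a,x\rangle$ exactly and the linear class $\cG(r)$ embeds directly; no further work is needed. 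You instead take the theorem statement literally --- $F_\cA$ as defined in \cref{eq:FA} ends with $\sigma_L$ --- so your sub-family consists of maps $x\mapsto \kappa_L r\,\phi(\langle w,x\rangle)$, and you strip the residual ReLU in expectation via the symmetrization step (pair $\eps$ with $-\eps$, choose $w'=-w$, and use $\phi(z)-\phi(-z)=z$), at the cost of another factor of $2$. This extra step is correct and fills a detail the paper's proof glosses over, so your argument is in that respect slightly more faithful to the stated theorem. One further point of agreement-by-coincidence: you correctly note that with the paper's normalized definition $\Rad(\cH_{|S}) = n^{-1}\bbE\sup_h\sum_i\eps_i h(x_i)$ the resulting constant is $c = c_0/(4n)$, i.e.\ sample-dependent; the paper's own proof has the same wrinkle but hides it by dropping the $n^{-1}$ factor in its final display, so your explicit handling of this normalization is if anything more careful than the original.
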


Note that, due to the nonlinearity, the lower bound should indeed depend on $\prod_i \|A_i\|_\sigma$ and not $\|\prod_i A_i\|_\sigma$;
as a simple sanity check, there exist networks for which the latter quantity is 0, but the network does
not compute the zero function.

\subsection{Related work}

To close this section on proofs, it is a good time to summarize connections to existing literature.

The algorithmic idea of large margin classifiers was introduced in the
linear case by \citet{VapnikBook1982}
(see also~\citep{BoserGuyonVapnik1992,CortesVapnik1995}).
\citet{VapnikBook1995} gave an intuitive explanation of the
performance of these methods based on a sample-dependent
VC-dimension calculation, but without generalization bounds.
The first rigorous generalization bounds for large margin linear
classifiers~\citep{STBWA1998} required a
scale-sensitive complexity analysis of real-valued function classes.
At the same time, a large margin analysis was developed for two-layer
networks \citep{bartlett_margin}, indeed with
a proof technique that inspired the layer-wise induction used to prove \Cref{fact:main:new} in the present work.
Margin theory was quickly extended to many other settings (see for instance the survey by \citet{esaim_survey}),
one major success being
an explanation of the generalization ability of boosting methods, which exhibit an explicit growth
in the size of the function class over time, but a stable excess risk \citep{boosting_margin}.
The contribution of the present work is to provide a margin bound (and corresponding Rademacher analysis)
that can be adapted to various operator norms at each layer.
Additionally, the present work operates in the multiclass setting,
and avoids an explicit dependence on the number of classes $k$,
which seems to appear in prior work
\citep{tz_multiclass_consistency,bartlett_tewari__multiclass}.

There are numerous generalization bounds for neural networks, including VC-dimension and fat-shattering bounds
(many of these can be found in \citep{anthony_bartlett_nn}).
Scale-sensitive analysis of neural networks started with \citep{bartlett_margin},
which can be interpreted in the present setting
as utilizing data norm $\|\cdot\|_\infty$ and operator norm $\|\cdot\|_{\infty\to\infty}$
(equivalently, the norm $\|A_{i}^{\top}\|_{1,\infty}$ on weight matrix $A_i$).
This analysis can be adapted to give a Rademacher complexity analysis \citep{bartlett_mendelson_rademacher},
and has been adapted to other norms \citep{neyshabur2015norm},
although the $\|\cdot\|_\infty$ setting appears to be necessary to avoid extra combinatorial factors.
More work is still needed to develop complexity analyses that have matching upper and lower bounds,
and also to determine which norms are well-adapted to neural networks as used in practice.

The present analysis utilizes covering numbers,
and is most closely connected to earlier covering number bounds \citep[Chapter 12]{anthony_bartlett_nn},
themselves based on the earlier fat-shattering analysis \citep{bartlett_margin},
however the technique here of pushing an empirical cover through layers
is akin to VC dimension proofs for neural networks
\citep{anthony_bartlett_nn}.
The use of Maurey's sparsification lemma
was inspired
by linear predictor covering number bounds \citep{tz_covering}.

\textbf{Comparison to preprint.}
The original preprint of this paper \citep{bartlett2017spectrally} featured a slightly different version of the spectral complexity $R_{\cA}$, given by $
  \del{\prod_{i=1}^L \rho_i \|A_i\|_\sigma}
  \del{\sum_{i=1}^L \frac{ \|A_i - M_i\|_{1}^{2/3}}{\|A_i\|_\sigma^{2/3}}}^{3/2}.
$
In the present version \pref{eq:spec_comp}, each $\|A_i - M_i\|_{1}$ term is replaced by $\|A_i^{\top} - M_i^{\top}\|_{2,1}$. This is a strict improvement since for any matrix $A\in\bbR^{d\times{}m}$ one has $\nrm*{A}_{2,1}\leq{}\nrm*{A}_{1}$, and in general the gap between these two norms can be as large as $\sqrt{d}$.

On a related note, all of the figures in this paper use the $\ls_1$ norm in the spectral complexity $R_\cA$ instead of the $(2,1)$ norm. Variants of the experiments described in \pref{sec:empirical} were carried out using each of the $l_1$, $(2,1)$, and $l_2$ norms in the $(\sum_{i=1}^{L}(\cdot)^{2/3})^{3/2}$ term with negligible difference in the results.

Since spectrally-normalized margin bounds were first proposed in the preprint \citep{bartlett2017spectrally}, subsequent works \citep{neyshabur2017pac,neyshabur2017implicit}  re-derived a similar spectrally-normalized bound using the PAC-Bayes framework. Specifically, these works showed that $R_{\cA}$ may be replaced (up to $\ln(W)$ factors) by:
  $\del[1]{\prod_{i=1}^L \rho_i \|A_i\|_\sigma}\cdot
  L\del[1]{\sum_{i=1}^L \frac{(\sqrt{W}\|A_i - M_i\|_{2})^{2}}{\|A_i\|_\sigma^{2}}}^{1/2}$.
Unfortunately, this bound never improves on \Cref{fact:main:new}, and indeed can be derived from it as follows.
First, the dependence on the individual matrices $A_i$ in the second term of this bound can be obtained from \Cref{fact:main:new} since for any $A\in\bbR^{d\times{}m}$ it holds that $\nrm*{A^{\top}}_{2,1}\leq{}\sqrt{d}\nrm*{A}_{2}$. Second, the functional form $(\sum_{i=1}^{L}(\cdot)^{2/3})^{3/2}$ appearing in \Cref{fact:main:new} may be replaced by the form $L(\sum_{i=1}^{L}(\cdot)^{2})^{1/2}$ appearing above by using $\nrm*{\alpha}_{2/3}\leq{}L\nrm*{\alpha}_{2}$ which holds for any $\alpha\in\bbR^{L}$ (and can be proved, for instance, with Jensen's inequality).

\section{Further observations and open problems}
\label{sec:open}

\paragraph{Adversarial examples.}
Adversarial examples are a phenomenon where the neural network predictions can be altered by
adding seemingly imperceptible noise to an input \citep{nn_adversarial}.
This phenomenon can be connected to margins as follows.  The margin is nothing more than
the distance an input must traverse before its label is flipped; consequently, low margin points
are more susceptible to adversarial noise than high margin points.
Concretely, taking the 100 lowest margin inputs from \ciften and adding uniform noise
at scale $0.15$ yielded flipped labels on 5.86\% of the images, whereas the same level
of noise on high margin points yielded 0.04\% flipped labels.
Can the bounds here suggest a way to defend against adversarial examples?

\paragraph{Regularization.}
It was observed in \citep{rethinking} that explicit regularization contributes little to the generalization
performance of neural networks.  In the margin framework, standard weight decay ($l_2$) regularization
seemed to have little impact on margin distributions in \Cref{sec:empirical}.
On the other hand, in the boosting literature, special types of regularization were developed to
maximize margins \citep{shai_singer_weaklearn_linsep}; perhaps a similar development can be performed here?

\paragraph{SGD.}
The present analysis applies to predictors that have large margins; what is missing is an analysis verifying
that SGD applied to standard neural networks returns large margin predictors!
Indeed, perhaps SGD returns not simply large margin predictors, but predictors that are well-behaved
in a variety of other ways that can be directly translated into refined generalization bounds.

\paragraph{Improvements to \Cref{fact:main:new}.}
There are several directions in which \Cref{fact:main:new} might be
improved. Can a better choice
of layer geometries (norms) yield better
bounds on practical networks?
Can the nonlinearities' worst-case Lipschitz constant
be replaced with an (empirically) averaged quantity?
Alternatively, can better lower bounds rule out these directions?

\paragraph{Rademacher vs. covering.}
Is it possible to prove \Cref{fact:main:new} solely via Rademacher complexity,
with no invocation of covering numbers?

\subsection*{Acknowledgements}
The authors thank Srinadh Bhojanapalli, Ryan Jian, Behnam Neyshabur, Maxim Raginsky,
Andrew J. Risteski, and Belinda Tzen for useful conversations
and feedback.
The authors thank Ben Recht for giving a provocative lecture at the Simons Institute,
stressing the need for understanding of both generalization and optimization of neural networks.
M.T.~and D.F.~acknowledge the use of a GPU machine provided by Karthik
Sridharan and made possible by an NVIDIA GPU grant.
D.F.~acknowledges the support of the NDSEG fellowship.
P.B.~gratefully acknowledges the support of the NSF through grant
IIS-1619362 and of the Australian Research Council through an
Australian Laureate Fellowship (FL110100281) and through the ARC
Centre of Excellence for Mathematical and Statistical Frontiers.
The authors thank the Simons Institute for the Theory of Computing
Spring 2017 program on the Foundations of Machine Learning.
Lastly, the authors are grateful to La Burrita (both the north and the south Berkeley campus locations)
for upholding the glorious tradition of the California Burrito.

\bibliographystyle{plainnat}
\bibliography{nn}

\begin{thebibliography}{26}
\providecommand{\natexlab}[1]{#1}
\providecommand{\url}[1]{\texttt{#1}}
\expandafter\ifx\csname urlstyle\endcsname\relax
  \providecommand{\doi}[1]{doi: #1}\else
  \providecommand{\doi}{doi: \begingroup \urlstyle{rm}\Url}\fi

\bibitem[Anthony and Bartlett(1999)]{anthony_bartlett_nn}
Martin Anthony and Peter~L. Bartlett.
\newblock \emph{Neural Network Learning: Theoretical Foundations}.
\newblock Cambridge University Press, 1999.

\bibitem[Bartlett et~al.(2017)Bartlett, Foster, and
  Telgarsky]{bartlett2017spectrally}
Peter Bartlett, Dylan~J Foster, and Matus Telgarsky.
\newblock Spectrally-normalized margin bounds for neural networks.
\newblock \emph{arXiv preprint arXiv:1706.08498}, 2017.

\bibitem[Bartlett(1996)]{bartlett_margin}
Peter~L. Bartlett.
\newblock For valid generalization the size of the weights is more important
  than the size of the network.
\newblock In \emph{NIPS}, 1996.

\bibitem[Bartlett and Mendelson(2002)]{bartlett_mendelson_rademacher}
Peter~L. Bartlett and Shahar Mendelson.
\newblock Rademacher and gaussian complexities: Risk bounds and structural
  results.
\newblock \emph{JMLR}, 3:\penalty0 463--482, Nov 2002.

\bibitem[Boser et~al.(1992)Boser, Guyon, and Vapnik]{BoserGuyonVapnik1992}
Bernhard~E. Boser, Isabelle~M. Guyon, and Vladimir~N. Vapnik.
\newblock A training algorithm for optimal margin classifiers.
\newblock In \emph{Proceedings of the Fifth Annual Workshop on Computational
  Learning Theory}, COLT '92, pages 144--152, New York, NY, USA, 1992. ACM.
\newblock ISBN 0-89791-497-X.

\bibitem[Boucheron et~al.(2005)Boucheron, Bousquet, and Lugosi]{esaim_survey}
St\'{e}phane Boucheron, Olivier Bousquet, and Gabor Lugosi.
\newblock Theory of classification: A survey of some recent advances.
\newblock \emph{ESAIM: Probability and Statistics}, 9:\penalty0 323--375, 2005.

\bibitem[Chollet et~al.(2015)]{chollet2015keras}
Fran\c{c}ois Chollet et~al.
\newblock Keras.
\newblock \url{https://github.com/fchollet/keras}, 2015.

\bibitem[Cisse et~al.(2017)Cisse, Bojanowski, Grave, Dauphin, and
  Usunier]{parseval_networks}
Moustapha Cisse, Piotr Bojanowski, Edouard Grave, Yann Dauphin, and Nicolas
  Usunier.
\newblock Parseval networks: Improving robustness to adversarial examples.
\newblock In \emph{ICML}, 2017.

\bibitem[Cortes and Vapnik(1995)]{CortesVapnik1995}
Corinna Cortes and Vladimir~N. Vapnik.
\newblock Support-vector networks.
\newblock \emph{Machine Learning}, 20\penalty0 (3):\penalty0 273--297, 1995.

\bibitem[Goodfellow et~al.(2014)Goodfellow, Shlens, and
  Szegedy]{nn_adversarial}
Ian~J. Goodfellow, Jonathon Shlens, and Christian Szegedy.
\newblock Explaining and harnessing adversarial examples.
\newblock 2014.
\newblock {\tt arXiv:1412.6572 [stat.ML]}.

\bibitem[He et~al.(2016)He, Zhang, Ren, and Sun]{resnet}
Kaiming He, Xiangyu Zhang, Shaoqing Ren, and Jian Sun.
\newblock Identity mappings in deep residual networks.
\newblock In \emph{ECCV}, 2016.

\bibitem[Krizhevsky et~al.(2012)Krizhevsky, Sutskever, and
  Hinton]{imagenet_sutskever}
Alex Krizhevsky, Ilya Sutskever, and Geoffery Hinton.
\newblock Imagenet classification with deep convolutional neural networks.
\newblock In \emph{NIPS}, 2012.

\bibitem[Mohri et~al.(2012)Mohri, Rostamizadeh, and Talwalkar]{mohri_book}
Mehryar Mohri, Afshin Rostamizadeh, and Ameet Talwalkar.
\newblock \emph{Foundations of Machine Learning}.
\newblock MIT Press, 2012.

\bibitem[Neyshabur(2017)]{neyshabur2017implicit}
Behnam Neyshabur.
\newblock Implicit regularization in deep learning.
\newblock \emph{CoRR}, abs/1709.01953, 2017.
\newblock URL \url{http://arxiv.org/abs/1709.01953}.

\bibitem[Neyshabur et~al.(2015)Neyshabur, Tomioka, and
  Srebro]{neyshabur2015norm}
Behnam Neyshabur, Ryota Tomioka, and Nathan Srebro.
\newblock Norm-based capacity control in neural networks.
\newblock In \emph{COLT}, 2015.

\bibitem[Neyshabur et~al.(2017)Neyshabur, Bhojanapalli, McAllester, and
  Srebro]{neyshabur2017pac}
Behnam Neyshabur, Srinadh Bhojanapalli, David McAllester, and Nathan Srebro.
\newblock A pac-bayesian approach to spectrally-normalized margin bounds for
  neural networks.
\newblock \emph{CoRR}, abs/1707.09564, 2017.

\bibitem[Pisier(1980)]{pisier1980remarques}
Gilles Pisier.
\newblock Remarques sur un r{\'e}sultat non publi{\'e} de b. maurey.
\newblock \emph{S{\'e}minaire Analyse fonctionnelle (dit)}, pages 1--12, 1980.

\bibitem[Schapire et~al.(1997)Schapire, Freund, Bartlett, and
  Lee]{boosting_margin}
Robert~E. Schapire, Yoav Freund, Peter Bartlett, and Wee~Sun Lee.
\newblock Boosting the margin: A new explanation for the effectiveness of
  voting methods.
\newblock In \emph{ICML}, pages 322--330, 1997.

\bibitem[Shalev-Shwartz and Singer(2008)]{shai_singer_weaklearn_linsep}
Shai Shalev-Shwartz and Yoram Singer.
\newblock On the equivalence of weak learnability and linear separability: New
  relaxations and efficient boosting algorithms.
\newblock In \emph{COLT}, 2008.

\bibitem[Shawe-Taylor et~al.(1998)Shawe-Taylor, Bartlett, Williamson, and
  Anthony]{STBWA1998}
J.~Shawe-Taylor, P.~L. Bartlett, R.~C. Williamson, and M.~Anthony.
\newblock Structural risk minimization over data-dependent hierarchies.
\newblock \emph{IEEE Trans. Inf. Theor.}, 44\penalty0 (5):\penalty0 1926--1940,
  September 1998.

\bibitem[Tewari and Bartlett(2007)]{bartlett_tewari__multiclass}
Ambuj Tewari and Peter~L. Bartlett.
\newblock On the consistency of multiclass classification methods.
\newblock \emph{Journal of Machine Learning Research}, 8:\penalty0 1007--1025,
  2007.

\bibitem[Vapnik(1982)]{VapnikBook1982}
Vladimir~N. Vapnik.
\newblock \emph{Estimation of Dependences Based on Empirical Data}.
\newblock Springer-Verlag, New York, 1982.

\bibitem[Vapnik(1995)]{VapnikBook1995}
Vladimir~N. Vapnik.
\newblock \emph{The Nature of Statistical Learning Theory}.
\newblock Springer-Verlag, New York, 1995.

\bibitem[Zhang et~al.(2017)Zhang, Bengio, Hardt, Recht, and
  Vinyals]{rethinking}
Chiyuan Zhang, Samy Bengio, Moritz Hardt, Benjamin Recht, and Oriol Vinyals.
\newblock Understanding deep learning requires rethinking generalization.
\newblock \emph{ICLR}, 2017.

\bibitem[Zhang(2002)]{tz_covering}
Tong Zhang.
\newblock Covering number bounds of certain regularized linear function
  classes.
\newblock \emph{Journal of Machine Learning Research}, 2:\penalty0 527--550,
  2002.

\bibitem[Zhang(2004)]{tz_multiclass_consistency}
Tong Zhang.
\newblock Statistical analysis of some multi-category large margin
  classification methods.
\newblock \emph{Journal of Machine Learning Research}, 5:\penalty0 1225--1251,
  2004.

\end{thebibliography}

\clearpage

\appendix

\section{Proofs}

This appendix collects various proofs omitted from the main text.

\subsection{Lipschitz properties of ReLU and max-pooling nonlinearities}
\label{sec:app:prelim}

The standard \emph{ReLU} (``Rectified Linear Unit'') is the univariate mapping
\[
  \srelu(r) := \max\{0, r\}.
\]
When applied to a vector or a matrix, it operates coordinate-wise.
While the ReLU is currently the most popular choice of univariate nonlinearity,
another common choice is the \emph{sigmoid} $r\mapsto 1 / (1+\exp(-r))$.
More generally, these univariate nonlinearities are Lipschitz, and this
carries over to their vector and matrix forms as follows.

\begin{lemma}
  \label{fact:relu_lip}
  If $\sigma : \R^d\to\R^d$ is $\rho$-Lipschitz along every coordinate,
  then it is $\rho$-Lipschitz according to $\|\cdot\|_p$ for any $p\geq 1$.
  \end{lemma}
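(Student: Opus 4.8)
The plan is to reduce the claim to the defining per-coordinate inequality and then reassemble the coordinates using the structure of the $\|\cdot\|_p$ norm. The hypothesis that $\sigma$ is $\rho$-Lipschitz along every coordinate means that for all $x, y \in \R^d$ and every index $i$ one has $|\sigma(x)_i - \sigma(y)_i| \le \rho\,|x_i - y_i|$. Fixing an arbitrary pair $x, y$, I would abbreviate $a_i := |\sigma(x)_i - \sigma(y)_i|$ and $c_i := |x_i - y_i|$, so the hypothesis reads $a_i \le \rho c_i$ for every $i$, with all quantities nonnegative (being absolute values).

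For finite $p$, the key step is that $t \mapsto t^p$ is nondecreasing on $[0,\infty)$, so $a_i \le \rho c_i$ yields $a_i^p \le \rho^p c_i^p$. Summing over $i$ and applying the (also monotone) $p$-th root gives $\|\sigma(x) - \sigma(y)\|_p = (\sum_i a_i^p)^{1/p} \le \rho\,(\sum_i c_i^p)^{1/p} = \rho\,\|x - y\|_p$. The case $p = \infty$ is handled identically but with the maximum in place of the sum: $\max_i a_i \le \rho \max_i c_i$. Since $x$ and $y$ were arbitrary, this is precisely the desired $\rho$-Lipschitz bound in $\|\cdot\|_p$, uniformly over $p \ge 1$.

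The main obstacle here is notational rather than mathematical: one must check that the monotonicity of $t \mapsto t^p$ is invoked legitimately, which requires exactly the nonnegativity of $a_i$ and $c_i$, and that the argument is phrased uniformly so that the single endpoint $p=\infty$ is not overlooked. Notably, no interaction between coordinates is ever needed, which is the whole point: the coordinate-wise Lipschitz hypothesis passes through the $\|\cdot\|_p$ geometry term by term, so the same $\rho$ survives without any dimension-dependent loss.
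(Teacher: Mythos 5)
Your proof is correct and is essentially identical to the paper's: both reduce to the per-coordinate inequality $|\sigma(x)_i - \sigma(y)_i| \le \rho|x_i - y_i|$, raise to the $p$-th power, sum over coordinates, and take the $p$-th root. The only (harmless) addition is your explicit treatment of $p=\infty$, which the paper's one-line argument omits.
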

\begin{proof}
  for any $z,z'\in\R^d$,
  \[
    \|\sigma(z) - \sigma(z')\|_p
    = \del{ \sum_i |\sigma(z)_i - \sigma(z')_i|^p }^{1/p}
    \leq \del{ \sum_i \rho^p|z_i - z'_i|^p }^{1/p}
    = \rho\|z - z'\|_p.
  \]
\end{proof}

Define a \emph{max-pooling operator} $\cP$ as follows.
Given an input and output pair of finite-dimensional vector spaces $\cT$ and $\cT'$ (possibly arranged as matrices or tensors),
the max-pooling operator iterates over a collection of sets of indices $\cZ$
(whose cardinality is equal to the dimension of $\cT$'),
and for each element of $Z_i \in \cZ$ sets the corresponding coordinate $i$ in the output
to the maximum entry of the input over $Z_i$:
given $T \in \cT$,
\[
  \cP(T)_i := \max_{j \in Z_i} T_j.
\]
The following Lipschitz constant of pooling operators will depend on the number of times each coordinate
is accessed across elements of $\cZ$;
when this operator is used in computer vision, the number of times is typically a small constant,
for instance 5 or 9 \citep{imagenet_sutskever}.
\begin{lemma}
  \label{fact:maxpool_lip}
  Suppose that each coordinate $j$ of the input appears in at most $m$ elements of the collection $\mc{Z}$. Then
  the max-pooling operator $\cP$
  is $m^{1/p}$-Lipschitz 
  wrt $\|\cdot\|_p$ for any $p\geq 1$. In particular, the max-pooling operator is $1$-Lipschitz whenever $\mc{Z}$ forms a partition.
  \end{lemma}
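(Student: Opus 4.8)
The plan is to reduce the whole statement to a single coordinate-wise inequality and then close with a counting argument on how often each input coordinate is read. First I would record the elementary fact that the maximum is $1$-Lipschitz with respect to the sup norm: for any finite index set $Z$ and any vectors $T,T'$,
\[
  \envert*{\max_{j\in Z} T_j - \max_{j\in Z} T'_j}
  \leq
  \max_{j\in Z} \envert*{T_j - T'_j}.
\]
This is immediate by considering the index $j^\star$ attaining the first maximum: $\max_j T_j - \max_j T'_j \leq T_{j^\star} - T'_{j^\star} \leq \max_j |T_j - T'_j|$, and symmetrically with the roles of $T$ and $T'$ swapped, so the absolute value of the difference is controlled.

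Applying this to each output coordinate $i$ of the pooling operator gives $|\cP(T)_i - \cP(T')_i| \leq \max_{j\in Z_i}|T_j - T'_j|$. Next I would raise to the $p$-th power and use that the maximum of a finite set of nonnegative numbers is at most their sum:
\[
  \envert*{\cP(T)_i - \cP(T')_i}^p
  \leq
  \max_{j\in Z_i}\envert*{T_j - T'_j}^p
  \leq
  \sum_{j\in Z_i}\envert*{T_j - T'_j}^p.
\]
Summing over all output coordinates $i$ then yields $\|\cP(T) - \cP(T')\|_p^p \leq \sum_i \sum_{j\in Z_i} |T_j - T'_j|^p$.

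The crucial step is the counting argument. By hypothesis each input coordinate $j$ appears in at most $m$ of the index sets $Z_i$, so in the double sum $\sum_i \sum_{j\in Z_i}$ each term $|T_j - T'_j|^p$ is counted at most $m$ times. Hence
\[
  \|\cP(T) - \cP(T')\|_p^p
  \leq
  m\sum_j \envert*{T_j - T'_j}^p
  =
  m\,\|T - T'\|_p^p,
\]
and taking $p$-th roots gives the claimed constant $m^{1/p}$. The partition case follows at once, since then each coordinate lies in exactly one element of $\cZ$, forcing $m=1$.

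I expect no serious obstacle here; the argument is a two-line sup-norm Lipschitz fact composed with monotonicity of $t\mapsto t^p$. The only point needing care is the bookkeeping in reorganizing the iterated sum $\sum_i\sum_{j\in Z_i}$ into a sum over $j$, which is precisely where the ``appears in at most $m$ sets'' hypothesis enters and produces the factor $m$.
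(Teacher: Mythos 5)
Your proof is correct and follows essentially the same route as the paper's: bound each pooled output coordinate by $\max_{j\in Z_i}|T_j - T'_j|$, relax the max to a sum of $p$-th powers, and reorganize the double sum using the multiplicity-$m$ hypothesis. The only (cosmetic) difference is that you establish the per-coordinate bound via the symmetric ``max is $1$-Lipschitz in the sup norm'' argument, whereas the paper uses a without-loss-of-generality ordering and a $\min$--$\max$ rewriting; the content is identical.
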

\begin{proof}
  Let $T,T' \in \cT$ be given.
  First consider any fixed set of indices $Z \in \cZ$,
  and suppose without loss of generality that $\cP(T)_Z = \max_{j \in Z} T_j \geq \max_{j \in Z} T'_j$.
  Then
  \[
    |\cP(T)_Z - \cP(T')_Z|^p
    = \del{ \min_{j' \in Z} \max_{j \in Z} T_j - T'_{j'} }^p
    \leq \max_{j \in Z} \del{ T_j - T'_j }^p
    \leq \sum_{j \in Z} \envert{ T_j - T'_j }^p.
  \]
  Consequently,
  \begin{align*}
    \|\cP(T) - \cP(T')\|_p
    &
    =
    \del{ \sum_i |\cP(T)_i - \cP(T')_i|^p }^{1/p}
    =
    \del{ \sum_{Z \in \cZ} |\cP(T)_Z - \cP(T')_Z|^p }^{1/p}
    \\
    &\leq
    \del{ \sum_{Z \in \cZ} \sum_{j \in Z} |T_j - T'_j|^p }^{1/p}
    =
    \del{ \sum_{j}\sum_{Z\in\mc{Z}:j\in{}Z} |T_j - T'_j|^p }^{1/p}
    \\
    &\leq{}
    \del{ m\sum_{j}|T_j - T'_j|^p }^{1/p}
    =
    m^{1/p}\|T - T'\|_p.
  \end{align*}
\end{proof}

\subsection{Margin properties in \Cref{sec:multiclass_margins}}

The goal of this subsection is to prove the general margin bound in \Cref{fact:margin_multiclass}.
To this end, it is first necessary to establish a few properties
of the margin operator $\mop(v,j) := v_j - \max_{i\neq j} v_i$
and of the ramp loss $\ell_\lambda$.

\begin{lemma}
  \label{fact:margin_lip}
  For every $j$ and every $p\geq 1$, $\mop(\cdot, j)$ is 2-Lipschitz wrt $\|\cdot\|_p$.
\end{lemma}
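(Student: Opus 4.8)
The plan is to write $\mop(\cdot, j)$ as a difference of two maps, each of which I will show is $1$-Lipschitz with respect to $\|\cdot\|_p$, and then conclude via the triangle inequality that their difference is $2$-Lipschitz. Explicitly, for fixed $j$ decompose
\[
  \mop(v,j) = \phi(v) - \psi(v), \qquad \phi(v) := v_j, \quad \psi(v) := \max_{i\neq j} v_i.
\]

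First I would handle the coordinate projection $\phi$. Since $|v_j - v_j'| \leq \max_i |v_i - v_i'| = \|v-v'\|_\infty \leq \|v - v'\|_p$ for every $p \geq 1$ (the last step being the standard monotonicity $\|\cdot\|_\infty \leq \|\cdot\|_p$), the map $\phi$ is $1$-Lipschitz with respect to $\|\cdot\|_p$. Next I would treat the restricted maximum $\psi$. The key elementary fact is that for any two real tuples $(a_i)$ and $(b_i)$ one has $|\max_i a_i - \max_i b_i| \leq \max_i |a_i - b_i|$: assuming without loss of generality that $\max_i a_i \geq \max_i b_i$ and letting $k$ attain the maximum of $a$, one gets $\max_i a_i - \max_i b_i \leq a_k - b_k \leq \max_i |a_i - b_i|$. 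Applying this to the coordinates $i \neq j$ and then bounding $\max_{i \neq j}|v_i - v_i'| \leq \|v - v'\|_\infty \leq \|v-v'\|_p$ shows $\psi$ is likewise $1$-Lipschitz with respect to $\|\cdot\|_p$.

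Finally I would combine the two estimates through the triangle inequality:
\[
  |\mop(v,j) - \mop(v',j)|
  \leq |\phi(v) - \phi(v')| + |\psi(v) - \psi(v')|
  \leq \|v - v'\|_p + \|v - v'\|_p
  = 2\|v - v'\|_p,
\]
which is exactly the claimed bound, uniformly over $j$ and over $p \geq 1$.

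There is no serious obstacle here; the only point requiring a line of justification is the $1$-Lipschitzness of the maximum, which rests on the inequality $|\max_i a_i - \max_i b_i| \leq \max_i |a_i - b_i|$ together with $\|\cdot\|_\infty \leq \|\cdot\|_p$. I would note that the restriction of the maximum to indices $i \neq j$ is harmless, since taking a maximum over a subset of coordinates only decreases the relevant $\ell_\infty$ discrepancy.
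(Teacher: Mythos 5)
Your proposal is correct and is essentially the paper's argument in modular form: both proofs bound the coordinate term $|v_j - v'_j|$ and the max term $|\max_{i\neq j}v_i - \max_{i\neq j}v'_i|$ separately by $\|v-v'\|_\infty$ (the paper does this inline via a WLOG and a choice of maximizing coordinate, which is exactly your ``max is 1-Lipschitz'' fact) and then invokes $\|\cdot\|_\infty \leq \|\cdot\|_p$. The only difference is presentational, packaging the two estimates as a difference of 1-Lipschitz functions rather than a single chain of inequalities.
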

\begin{proof}
  Let $v,v',j$ be given, and suppose (without loss of generality) $\mop(v,j) \geq \mop(v',j)$.
  Choose coordinate $i\neq j$ so that $\mop(v',j) = v'_j - v'_i$.
  Then
  \begin{align*}
    \mop(v,j) - \mop(v',j)
    &= \del{ v_j - \max_{l\neq j} v_j } - \del{ v'_j - v'_i }
    = v_j - v'_j + v'_i + \min_{l \neq j } (-v_l)
    \\
    &\leq \del{ v_j - v'_j} + \del{v'_i -v_i }
    \leq 2\|v-v'\|_\infty
    \leq 2\|v-v'\|_p.
  \end{align*}
\end{proof}

Next, recall the definition of the ramp loss
\[
  \ell_\gamma(r) :=
  \begin{cases}
    0
    &r< -\gamma,
    \\
    1 + r/\gamma
    &r \in [-\gamma,0],
    \\
    1
    &
    r > 0,
  \end{cases}
\]
and of the ramp risk
\[
  \cR_\gamma(f) := \bbE(\ell_\gamma(-\mop(f(x),y))).
\]
(These quantities are standard; see for instance \citep{esaim_survey,tz_multiclass_consistency,bartlett_tewari__multiclass}.)

\begin{lemma}
  \label{fact:margins_ramp}
  For any $f : \R^d \to \R^k$ and every $\gamma>0$,
  \[
    \Pr[ \argmax_i f(x)_i \neq y]
    \leq
    \Pr[ \mop(f(x),y) \leq 0 ]
    \leq
    \cR_\gamma(f),
  \]
  where the $\argmax$ follows any deterministic tie-breaking strategy.
\end{lemma}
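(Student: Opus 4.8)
The plan is to prove each of the two inequalities by a pointwise domination argument, after which monotonicity of expectation finishes the job. Neither inequality requires any probabilistic machinery beyond this monotonicity; the entire content is a comparison of the relevant functions at each fixed pair $(x,y)$, followed by taking expectations (or probabilities) of nested events.

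For the first inequality I would establish the pointwise event inclusion $\{\argmax_i f(x)_i \neq y\} \subseteq \{\mop(f(x),y) \leq 0\}$. Fix $(x,y)$ and suppose the tie-breaking rule selects some $j \neq y$ as the $\argmax$. Since $j$ is a maximizer we have $f(x)_j \geq f(x)_y$, and since $j \neq y$ this gives $\max_{i \neq y} f(x)_i \geq f(x)_j \geq f(x)_y$, i.e. $\mop(f(x),y) = f(x)_y - \max_{i\neq y} f(x)_i \leq 0$. Taking probabilities of the nested events yields $\Pr[\argmax_i f(x)_i \neq y] \leq \Pr[\mop(f(x),y) \leq 0]$. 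The only subtlety is that the inclusion must hold for an \emph{arbitrary} deterministic tie-breaking rule; but the argument uses only that the selected index is a maximizer, so it is insensitive to how ties are resolved.

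For the second inequality I would show that for every real $r$ one has $\1[r \geq 0] \leq \ell_\gamma(r)$, and then instantiate $r = -\mop(f(x),y)$. This reduces to checking the three pieces of the ramp: for $r > 0$ both sides equal $1$; for $r \in [-\gamma, 0]$ the indicator is at most $1$ while $\ell_\gamma(r) = 1 + r/\gamma \geq 0$, and in particular the boundary case $r = 0$ gives $\ell_\gamma(0) = 1$, matching the indicator; and for $r < -\gamma$ (indeed for any $r < 0$) the indicator vanishes while $\ell_\gamma(r) \geq 0$. Thus $\1[\mop(f(x),y) \leq 0] = \1[-\mop(f(x),y) \geq 0] \leq \ell_\gamma(-\mop(f(x),y))$ pointwise, and taking expectations gives $\Pr[\mop(f(x),y) \leq 0] \leq \bbE[\ell_\gamma(-\mop(f(x),y))] = \cR_\gamma(f)$.

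There is no substantive obstacle in this argument; it amounts to verifying that the ramp loss dominates the zero-one margin indicator, which in turn dominates the classification error. The only points requiring a moment's care are \textbf{(a)} handling ties in the $\argmax$ so that the first inclusion is valid under every deterministic tie-breaking convention, and \textbf{(b)} the boundary value $r = 0$, where the convention $\mop(f(x),y) \leq 0$ (rather than strict inequality) must be matched against $\ell_\gamma(0) = 1$; both are dispatched by the case analysis above.
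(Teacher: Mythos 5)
Your proposal is correct and matches the paper's own argument: the paper likewise passes from the misclassification event to the event $\{\mop(f(x),y)\leq 0\}$ (via $\Pr[\max_{i\neq y} f(x)_i \geq f(x)_y]$, which is exactly your tie-breaking observation) and then bounds the indicator $\1[-\mop(f(x),y)\geq 0]$ pointwise by $\ell_\gamma(-\mop(f(x),y))$ before taking expectations. Your write-up simply makes explicit the case analysis on the ramp loss and the tie-handling that the paper leaves implicit.
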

\begin{proof}
  \begin{align*}
    \Pr[ \argmax_i f(x)_i \neq y]
    &\leq
    \Pr[ \max_{i\neq y} f(x)_i \geq f(x)_y]
    \\
    &=
    \Pr[ -\mop(f(x),y) \geq 0 ]
    \\
    &=
    \bbE \1[ -\mop(f(x),y) \geq 0 ]
    \\
    &\leq
    \bbE \ell_\gamma(-\mop(f(x),y))
  \end{align*}
\end{proof}

With these tools in place, the proof of \Cref{fact:margin_multiclass} is straightforward.
\begin{proof}[Proof of \Cref{fact:margin_multiclass}.]
  Since $\ell_\gamma$ has range $[0,1]$, it follows by standard properties of Rademacher complexity
  \citep[see, for example,][Theorem 3.1]{mohri_book}
  that with probability at least $1-\delta$,
  every $f\in\cF$ satisfies
  \[
    \cR_{\gamma}(f) \leq  \hcR_\gamma (f)
    +
    2 \Rad((\cF_\gamma)_{|S})
    + 3\sqrt{\frac{\ln(2/\delta)}{2n}}.
  \]
  The bound now follows by applying \Cref{fact:margins_ramp} to the left hand side.

\end{proof}

\subsection{Dudley Entropy Integral}

This section contains a slight variant of the standard Dudley entropy integral bound on the empirical Rademacher complexity (e.g. \cite{mohri_book}), which is used in the proof of \Cref{fact:main:new}. The presentation here diverges from standard presentations because the data metric (as in \cref{eq:data_l2}) is not normalized by $\sqrt{n}$. The proof itself is entirely standard however --- even up to constants --- and is included only for completeness.

\begin{lemma}
\label{lem:dudley}
Let $\mc{F}$ be a real-valued function class taking values in $\brk{0,1}$, and assume that $\mathbf{0}\in\mc{F}$. Then
\[
    \Rad(\cF_{|S})
    \leq \inf_{\alpha>0}\del{
      \frac{4\alpha}{\sqrt{n}} + \frac{12}{n}\int_{\alpha}^{\sqrt{n}}\sqrt{\log\cN(\mc{F}_{|S}, \veps, \nrm{\cdot}_{2})}d\veps.
    }
\]
\end{lemma}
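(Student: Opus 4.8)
The plan is to follow the standard chaining argument, taking care to track the nonstandard normalization of the data metric $\nrm{\cdot}_2$ (which is \emph{not} divided by $\sqrt{n}$ here, so that covering radii live on the scale $[0,\sqrt{n}]$ rather than $[0,1]$). First I would fix a geometrically decreasing sequence of scales $\veps_j := 2^{-j}\sqrt{n}$ for $j = 0,1,2,\dots$, and for each $j$ let $V_j$ be a minimal $\veps_j$-cover of $\cF_{|S}$ in the $\nrm{\cdot}_2$ metric, with $|V_j| = \cN(\cF_{|S}, \veps_j, \nrm{\cdot}_2)$. For each $f\in\cF$ and each level $j$ choose the nearest cover element $\pi_j(f)\in V_j$, so that $\nrm{f - \pi_j(f)}_2 \leq \veps_j$. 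Since the functions take values in $[0,1]$, the coarsest cover can be taken trivially with a single element (radius $\sqrt{n}$), and I would use $\mathbf{0}\in\cF$ to anchor the bottom of the chain: $\pi_0(f) = \mathbf 0$.

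The core step is the telescoping decomposition, carried out for a fixed integer $N$ to be optimized:
\[
  f = \del{f - \pi_N(f)} + \sum_{j=1}^{N} \del{\pi_j(f) - \pi_{j-1}(f)}.
\]
Plugging this into $\Rad(\cF_{|S}) = \frac1n\,\bbE\,\sup_{f}\sum_i \veps_i f(x_i)$ and distributing the supremum over the sum of links, I would bound the top residual term $f - \pi_N(f)$ by Cauchy--Schwarz against the Rademacher vector, giving a contribution of order $\veps_N$; this is the term that produces the $4\alpha/\sqrt n$ piece once $\veps_N$ is tuned to $\alpha$. Each link $\pi_j(f) - \pi_{j-1}(f)$ ranges over a finite set of size at most $|V_j|\cdot|V_{j-1}| \leq |V_j|^2$, with $\nrm{\cdot}_2$-norm at most $\veps_j + \veps_{j-1} = 3\veps_j$ by the triangle inequality. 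Applying the finite-class (Massart) Rademacher bound to each link — namely that for a finite set $T$ of vectors with $\nrm{t}_2\leq r$ one has $\bbE\sup_{t\in T}\sum_i\veps_i t_i \leq r\sqrt{2\log|T|}$ — yields a per-link bound of order $\veps_j\sqrt{\log\cN(\cF_{|S},\veps_j,\nrm{\cdot}_2)}$. Summing over $j$ and comparing the resulting sum $\sum_j \veps_j\sqrt{\log\cN(\cF_{|S},\veps_j)}$ to the integral via the monotonicity of the covering number (each summand dominates $\int_{\veps_{j+1}}^{\veps_j}\sqrt{\log\cN(\cF_{|S},\veps)}\,d\veps$) converts the discrete chain into the entropy integral $\int_\alpha^{\sqrt n}\sqrt{\log\cN}\,d\veps$, after dividing by the overall factor of $n$.

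The main obstacle — really just bookkeeping rather than a genuine difficulty — is getting the constants and the limits of integration right under the unnormalized metric. The displayed bound has $\frac{4\alpha}{\sqrt n}$ and $\frac{12}{n}\int_\alpha^{\sqrt n}$, whereas textbook statements assume normalization by $\sqrt n$ and integrate over $[\alpha,1/2]$ or similar. I would therefore be careful to (i) set the upper limit of integration to $\sqrt n$, which is the diameter of $\cF_{|S}$ in this metric since $\nrm{f}_2 = (\sum_i f(x_i)^2)^{1/2} \leq \sqrt n$ for $[0,1]$-valued $f$; (ii) match the geometric-series factor $\sum_j 2^{-j}$ and the $\sqrt 2$ from Massart's lemma and the factor $3$ from the triangle inequality into the stated constant $12$; and (iii) identify the truncation scale $\veps_N$ with the free parameter $\alpha$, taking the infimum over $\alpha > 0$ at the end. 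Since the paper only needs the stated form up to these explicit constants, I would verify the constant bookkeeping suffices to reach $4$ and $12$ and then take the infimum, completing the proof.
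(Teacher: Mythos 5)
Your proposal is correct and follows essentially the same argument as the paper's proof: the same geometric scales anchored at $\sqrt{n}$, the single-element coarsest cover justified by $\mathbf{0}\in\cF$, Cauchy--Schwarz for the residual term, Massart's finite-class lemma applied to the link sets of size at most $|V_j|^2$ and radius $3\veps_j$, and the same sum-to-integral comparison with truncation tuned to $\alpha$. The only difference is cosmetic indexing (the paper's $\veps_i = \sqrt{n}2^{-(i-1)}$ versus your $2^{-j}\sqrt{n}$), and your constant bookkeeping, once carried out, matches the paper's factors of $4$ and $12$ exactly.
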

\begin{proof}
Let $N\in\bbN$ be arbitrary and let $\veps_i=\sqrt{n}2^{-(i-1)}$ for each $i\in\brk{N}$. For each $i$ let $V_{i}$ denote the cover achieving $\cN(\mc{F}_{|S}, \veps_i, \nrm{\cdot}_{2})$, so that
\begin{equation}
\label{eq:data_l2}
\forall{}f\in\mc{F}\quad\exists{}v\in{}V_{i}\quad\prn*{\sum_{t=1}^{n}(f(x_t) - v_t)^{2}}^{1/2} \leq{} \veps_i,
\end{equation}
and $\abs{V_{i}} = \cN(\mc{F}_{|S}, \veps_i, \nrm{\cdot}_{2})$. For a fixed $f\in\cF$, let $v^{i}[f]$ denote the nearest element in $V_{_i}$. Then
\begin{align*}
&\En_{\eps}\sup_{f\in\cF}\sum_{t=1}^{n}\veps_if(x_t) \\&= \En_{\eps}\sup_{f\in\cF}\brk*{
\sum_{t=1}^{n}\eps_{t}(f(x_t)-v_{t}^{N}[f]) + \sum_{i=1}^{N-1}\sum_{t=1}^{n}\eps_{t}(v_{t}^{i}[f] - v_{t}^{i+1}[f]) - \sum_{t=1}^{n}\eps_{t}v_{t}^{1}[f]
} \\
&\leq{} 
\En_{\eps}\sup_{f\in\cF}\brk*{\sum_{t=1}^{n}\eps_{t}(f(x_t)-v_{t}^{N}[f])} + \sum_{i=1}^{N-1}\En_{\eps}\sup_{f\in\cF}\brk*{\sum_{t=1}^{n}\eps_{t}(v_{t}^{i}[f] - v_{t}^{i+1}[f])} + \En_{\eps}\sup_{f\in\cF}\brk*{\sum_{t=1}^{n}\eps_{t}v_{t}^{1}[f]}.
\end{align*}
For the third term, observe that it suffices to take $V_{1}=\crl*{\boldsymbol{0}}$, which implies
\[
\En_{\eps}\sup_{f\in\cF}\brk*{\sum_{t=1}^{n}\eps_{t}v_{t}^{1}[f]} = 0.
\]
The first term may be handled using Cauchy-Schwarz as follows:
\[
\En_{\eps}\sup_{f\in\cF}\brk*{\sum_{t=1}^{n}\eps_{t}(f(x_t)-v_{t}^{N}[f])}
\leq{} \sqrt{\En_{\eps}\sum_{t=1}^{n}(\eps_t)^{2}}\sqrt{\sup_{f\in\cF}\sum_{t=1}^{n}(f(x_t)-v_{t}^{N}[f])^{2}
} \leq{} \sqrt{n}\veps_{N}.
\]
Last to take care of are the terms of the form
\[
\En_{\eps}\sup_{f\in\cF}\brk*{\sum_{t=1}^{n}\eps_{t}(v_{t}^{i}[f] - v_{t}^{i+1}[f])}.
\]
For each $i$, let $W_i=\crl*{v^{i}[f]-v^{i+1}[f]\mid{}f\in\cF}$. Then $\abs{W_i}\leq{}\abs{V_i}\abs{V_{i+1}}\leq{}\abs{V_{i+1}}^{2}$,
\[
\En_{\eps}\sup_{f\in\cF}\brk*{\sum_{t=1}^{n}\eps_{t}(v_{t}^{i}[f] - v_{t}^{i+1}[f])} \leq{} \En_{\eps}\sup_{w\in{}W_i}\brk*{\sum_{t=1}^{n}\eps_tw_{t}},
\]
and furthermore
\begin{align*}
\sup_{w\in{}W_i}\sqrt{\sum_{t=1}^{n}w_{t}^{2}}  & = \sup_{f\in\cF}\nrm*{v^{i}[f] - v^{i+1}[f]}_{2} \\
&\leq{} 
\sup_{f\in\cF}\nrm*{v^{i}[f] - (f(x_1),\ldots,f(x_{n}))}_{2} + \sup_{f\in\cF}\nrm*{(f(x_1),\ldots,f(x_{n})) - v^{i+1}[f]}_{2}\\
&\leq{}\veps_{i} + \veps_{i+1}\\
&= 3\veps_{i+1}.
\end{align*}
With this observation, the standard Massart finite class lemma \citep{mohri_book} implies
\[
\En_{\eps}\sup_{w\in{}W_i}\brk*{\sum_{t=1}^{n}\eps_tw_{t}} \leq{}\sqrt{2\sup_{w\in{}W_i}\sum_{t=1}^{n}(w_t)^{2}\log\abs{W_i}} \leq{} 3\sqrt{2\log\abs{W_i}}\veps_{i+1} \leq{} 6\sqrt{\log\abs{V_{i+1}}}\veps_{i+1}.
\]
Collecting all terms, this establishes 
\begin{align*}
\En_{\eps}\sup_{f\in\cF}\sum_{t=1}^{n}\eps_{t}f(x_t) &\leq{} \veps_{N}\sqrt{n} + 6\sum_{i=1}^{N-1}\veps_{i+1}\sqrt{\log\cN(\mc{F}_{|S}, \veps_{i+1}, \nrm{\cdot}_{2})} \\
&\leq{} \veps_{N}\sqrt{n} + 12\sum_{i=1}^{N}(\veps_{i}-\veps_{i+1})\sqrt{\log\cN(\mc{F}_{|S}, \veps_i, \nrm{\cdot}_{2})} \\
&\leq{} \veps_{N}\sqrt{n} + 12\int_{\veps_{N+1}}^{\sqrt{n}}\sqrt{\log\cN(\mc{F}_{|S}, \veps, \nrm{\cdot}_{2})}d\veps.
\end{align*}
Finally, select any $\alpha>0$ and take $N$ be the largest integer with $\veps_{N+1}>\alpha$. Then $\veps_{N}=4\veps_{N+2}<4\alpha$, and so
\[
\veps_{N}\sqrt{n} + 12\int_{\veps_{N+1}}^{\sqrt{n}}\sqrt{\log\cN(\mc{F}_{|S}, \veps, \nrm{\cdot}_{2})}d\veps \leq{} 4\alpha\sqrt{n} + 12\int_{\alpha}^{\sqrt{n}}\sqrt{\log\cN(\mc{F}_{|S}, \veps, \nrm{\cdot}_{2})}d\veps.
\]

\end{proof}

\subsection{Proof of matrix covering (\Cref{fact:matrix_l21_covering})}

First recall the Maurey sparsification lemma.

\begin{lemma}[name={Maurey; cf. \citep{pisier1980remarques}, \citep[Lemma 1]{tz_covering}}]
  \label{fact:maurey}
  Fix Hilbert space $\cH$ with norm $\|\cdot\|$.
  Let $U\in \cH$ be given with representation $U = \sum_{i=1}^d \alpha_i V_i$ where $V_i\in\cH$ and $\alpha \in \R^d_{\geq 0} \setminus \cbr{0}$.
  Then for any positive integer $k$,
  there exists a choice of nonnegative integers $(k_1, \ldots, k_d)$,
  $\sum_i k_i = k$, such that
  \[
    \enVert{ U - \frac {\|\alpha\|_1}{k} \sum_{i=1}^d k_i V_i }^2
      \leq 
     \frac {\|\alpha\|_1} k \sum_{i=1}^d \alpha_i \| V_i \|^2
    \leq
    \frac {\|\alpha\|_1^2} k \max_i \|V_i\|^2.
  \]
\end{lemma}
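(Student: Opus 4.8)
The plan is to prove this by the probabilistic (empirical) method, which is the classical route for Maurey-type sparsification. Since $\alpha\in\R^d_{\geq 0}\setminus\cbr{0}$, the quantities $p_i := \alpha_i/\|\alpha\|_1$ form a probability distribution on $\cbr{1,\ldots,d}$, and the representation becomes $U = \|\alpha\|_1 \sum_{i=1}^d p_i V_i$. I would therefore view $\bar V := U/\|\alpha\|_1 = \sum_i p_i V_i$ as the mean $\bbE[V_Z]$ of a single random vector $V_Z$, where the index $Z$ is drawn according to $p$. Drawing $k$ independent copies $Z_1,\ldots,Z_k$ from $p$ and forming the empirical average $\widehat U := \frac{\|\alpha\|_1}{k}\sum_{j=1}^k V_{Z_j}$ then yields an unbiased estimate of $U$, i.e.\ $\bbE\widehat U = U$.

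The heart of the argument is bounding the expected squared error $\bbE\|U-\widehat U\|^2$, and this is exactly where the Hilbert-space inner product structure is used. Expanding the squared norm of the centered sum $\frac{\|\alpha\|_1}{k}\sum_{j=1}^k (V_{Z_j}-\bar V)$ via bilinearity, the independence and common mean of the $V_{Z_j}$ annihilate all cross terms, leaving $\bbE\|U-\widehat U\|^2 = \frac{\|\alpha\|_1^2}{k}\,\bbE\|V_Z-\bar V\|^2$. The single-draw variance is then controlled by the second moment, $\bbE\|V_Z-\bar V\|^2 = \bbE\|V_Z\|^2 - \|\bar V\|^2 \leq \bbE\|V_Z\|^2 = \sum_i p_i\|V_i\|^2 = \frac{1}{\|\alpha\|_1}\sum_i \alpha_i\|V_i\|^2$. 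Multiplying through gives precisely the first claimed inequality, $\bbE\|U-\widehat U\|^2 \leq \frac{\|\alpha\|_1}{k}\sum_i \alpha_i\|V_i\|^2$. The second inequality is then immediate from $\sum_i \alpha_i\|V_i\|^2 \leq \del{\max_i\|V_i\|^2}\sum_i\alpha_i = \|\alpha\|_1\max_i\|V_i\|^2$.

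To finish, I would invoke the probabilistic method: since the expected squared error is at most the stated bound, there must exist a particular realization $(Z_1,\ldots,Z_k)$ achieving squared error no larger than this expectation. Setting $k_i := \envert{\cbr{j : Z_j = i}}$ produces nonnegative integers with $\sum_i k_i = k$ and $\widehat U = \frac{\|\alpha\|_1}{k}\sum_i k_i V_i$, which is exactly the claimed deterministic conclusion. I do not expect a genuine obstacle here; the only point requiring care is the cross-term cancellation in the variance expansion, which relies on the $V_{Z_j}$ being i.i.d.\ with mean exactly $\bar V$—a property holding by construction of $p$—together with the bilinearity of the inner product.
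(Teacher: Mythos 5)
Your proposal is correct and follows essentially the same argument as the paper's proof: sample $k$ i.i.d.\ indices according to the distribution $\alpha_i/\|\alpha\|_1$, use independence and unbiasedness to cancel cross terms in the variance expansion, bound the variance by the second moment, and conclude by the probabilistic method with $k_i$ defined as the empirical counts. The only difference is cosmetic (you scale by $\|\alpha\|_1$ outside the empirical average, whereas the paper bakes the factor $\beta=\|\alpha\|_1$ into the sampled vectors $W_j=\beta V_i$), so the two computations coincide term by term.
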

\begin{proof}
  Set $\beta := \|\alpha\|_1$ for convenience,
  and let $(W_1, \ldots, W_k)$ denote $k$ iid random variables
  where $\Pr[W_1 = \beta V_i] := \alpha_i/\beta$.
  Define $W := k^{-1} \sum_{i=1}^k W_i$,
  whereby
  \[
    \bbE W = \bbE W_1 = \sum_{i=1}^d \beta V_i \del{ \frac{\alpha_i}{\beta} }
    = U.
  \]
  Consequently
  \begin{align*}
    \bbE \|U - W\|^2
    &=
    \frac 1 {k^2}
    \bbE \enVert{\sum_i (U - W_i)}^2
    =
    \frac 1 {k^2}
    \bbE
    \del{
      \sum_i
      \|U - W_i\|^2
      +
      \sum_{i \neq j}
      \ip{U - W_i}{U - W_j}
    }
    \\
    &= \frac 1 k \bbE\|U - W_1\|^2
    = \frac 1 k \del{ \bbE\| W_1\|^2 - \|U\|^2 }
    \leq \frac 1 k \bbE\| W_1\|^2
    \\
    &= \frac 1 k \sum_{i=1}^d \frac {\alpha_i}{\beta} \| \beta V_i\|^2
    = \frac \beta k \sum_{i=1}^d \alpha_i \| V_i\|^2
    \\
    &\leq
    \frac {\beta^2} k \max_i \| V_i\|^2.
  \end{align*}
  To finish, by the probabilistic method,
  there exists integers $(j_1,\ldots, j_k) \in \{1,\ldots,d\}^{k}$
  and an assignment $\hW_i := \beta V_{j_i}$ and $\hW := k^{-1} \sum_{i=1}^{k}\hW_i$
  such that
  \[
    \enVert{U - \hW}^2
    \leq \bbE\enVert{U - W}^2.
  \]
  The result now follows by defining integers $(k_1,\ldots,k_d)$ according
  to $k_i := \sum_{l=1}^k \1[j_l = i]$.
\end{proof}

As stated, the Maurey sparsification lemma seems to only grant bounds
in terms of $l_1$ norms.
As developed by \citet{tz_covering} in the vector covering case,
however, it is easy to
handle other norms by rescaling the cover elements.
With slightly more care, these proofs generalize to the matrix case,
thus yielding the proof of \Cref{fact:matrix_l21_covering}.

\begin{proof}[Proof of \Cref{fact:matrix_l21_covering}]
  Let matrix $X\in \R^{n\times d}$ be given,
  and obtain matrix $Y \in \R^{n\times d}$ by rescaling the columns of $X$
  to have unit $p$-norm: $Y_{:, j} := X_{:,j} / \|X_{:,j}\|_p$.
  Set $N:= 2dm$
  and $k := \lceil a^2 b^2 m^{2/r} / \epsilon^2 \rceil$
  and $\bar a := a m^{1/r}\|X\|_p$,
  and define
  \begin{align}
    \cbr{V_1,\ldots, V_N}
    &:= \cbr{ gY\bfe_i\bfe_j^\top : g\in \cbr{-1,+1}, i \in \cbr{1,\ldots,d}, j \in \cbr{1,\ldots,m} },
    \notag
    \\
    \cC
    &:=
    \cbr{
      \frac {\bar a}{k}
      \sum_{i=1}^N k_i V_i : k_i \geq 0, \sum_{i=1}^N k_i = k
    }
    =
    \cbr{
      \frac {\bar a} k
      \sum_{j=1}^kV_{i_j}
      :
      (i_1,\ldots,i_k) \in \brk{N}^k
    },
    \label{eq:whatever_man}
  \end{align}
  where the $k_i$'s are integers. Now $p\leq 2$ combined with the definition of $V_i$ and $Y$ implies
  \[
    \max_i \|V_i\|_2 \leq \max_i \|Y\bfe_i\|_2 = \max_i \frac {\|X\bfe_i\|_2}{\|X\bfe_i\|_p} \leq 1.
  \]

  It will now be shown that $\cC$ is the desired cover.
  Firstly, $|\cC| \leq N^k$ by construction, namely by the final equality of \cref{eq:whatever_man}.
    Secondly, let $A$ with $\|A\|_{q,s}\leq a$ be given,
  and construct a cover element within $\cC$ using the following technique,   which follows the approach developed by \citet{tz_covering} for linear prediction in which the basic Maurey lemma is applied to non-$l_1$ balls simply by rescaling.

  \begin{itemize}
    \item
      Define $\alpha \in \R^{d \times m}$ to be a ``rescaling matrix'' where every
      element of row $j$ is equal to $\|x_j\|_p$;
      the purpose of $\alpha$ is to annul the rescaling of $X$ introduced by $Y$,
      meaning $XA = Y(\alpha \odot A)$ where ``$\odot$'' denotes element-wise product.
      Note,
      \begin{align*}
        \|\alpha\|_{p,r}
        &= \enVert{ (\|\alpha_{:,1}\|_p, \ldots, \|\alpha_{:,m} \|_p)}_r \\
       & = \enVert{ \prn*{ \enVert{ (\|X_{:,1}\|_p,\ldots,\|X_{:,d}\|_p) }_p, \ldots, \enVert{ (\|X_{:,1}\|_p,\ldots,\|X_{:,d}\|_p) }_p }}_r
        \\
        &= m^{1/r} \enVert{ (\|X_{:,1}\|_p,\ldots,\|X_{:,d}\|_p) }_p
        = m^{1/r} \del{\sum_{j=1}^d \|X_{:,j}\|_p^p}^{1/p}
        \\
        &= m^{1/r} \del{\sum_{j=1}^d \sum_{i=1}^n X_{i,j}^p }^{1/p}
        = m^{1/r} \|X\|_p.
      \end{align*}
    \item
      Define $B := \alpha \odot A$, whereby using conjugacy of $\|\cdot\|_{p,r}$ and $\|\cdot\|_{q,s}$ gives
      \begin{align*}
        \|B\|_1 \leq \ip{\alpha}{|A|} \leq \|\alpha\|_{p,r} \|A\|_{q,s} \leq m^{1/r} \|X\|_p a
        = \bar a.
      \end{align*}
      Consequently, $XA$ is equal to 
      \[
	YB
        = Y \sum_{i=1}^d \sum_{j=1}^m B_{ij} \bfe_i \bfe_j^\top
        = \|B\|_1 \sum_{i=1}^d \sum_{j=1}^m \frac{B_{ij}}{\|B\|_1} \del{ Y \bfe_i \bfe_j^\top}
        \in
        \bar a\cdot \conv(\cbr{V_1,\ldots,V_N}),
      \]
      where $\conv(\cbr{V_1,\ldots,V_N})$ is the convex hull of $\cbr{V_1,\ldots,V_N}$.
    \item
      Combining the preceding constructions with \Cref{fact:maurey},
      there exist nonnegative integers $(k_1,\ldots,k_N)$ with $\sum_i k_i = k$
      with
      \[
        \enVert{XA - \frac {\bar a}{k} \sum_{i=1}^N k_i V_i}_{2}^2
        =
        \enVert{YB - \frac {\bar a}{k} \sum_{i=1}^N k_i V_i}_{2}^2
        \leq \frac {{\bar a}^2}{k}    \max_i \|V_i\|_2
        \leq \frac {a^2 m^{2/r} \|X\|_p^2}{k}
        \leq \eps^2.
      \]
      The desired cover element is thus $\frac {\bar a}{k} \sum_i k_i V_i \in \cC$.
      \end{itemize}
\end{proof}

\subsection{A whole-network covering bound for general norms}
\label{sec:cover:general}

As stated in the text, the construction of a whole-network cover via
induction on layers does not demand much structure from the norms placed on the weight matrices.
This subsection develops this general analysis.
A tantalizing direction for future work is to specialize the general bound in other ways,
namely ones that are better adapted to the geometry of neural networks as encountered
in practice.

The structure of the networks is the same as before;
namely, given matrices $\cA = (A_1,\ldots,A_L)$,
define the mapping $F_\cA$ as~\eqref{eq:FA}, and more generally for
$i\le L$ define $\cA_1^i:=(A_1,\ldots,A_i)$ and
\[
  F_{\cA_1^i}(Z) := \sigma_i(A_i \sigma_{i-1}(A_{i-1} \cdots \sigma_1(A_1 Z) \cdots)),
\]
with the convention $F_{\emptyset}(Z)=Z$.
\begin{itemize}
  \item
    Define two sequences of vector spaces $\cV_1,\ldots,\cV_L$ and
    $\cW_2,\ldots,\cW_{L+1}$, where $\cV_i$ has a norm $|\cdot|_i$ and
    $\cW_i$ has norm $\opnorm{\cdot}_i$.
  \item
    The inputs $Z \in \cV_1$ satisfy a norm constraint $|Z|_1 \leq B$.
    The subscript merely indicates an index, and does not refer to any $l_1$ norm.
    The vector space $\cV_1$, and moreover the collection of vector spaces
    $\cV_i$ and $\cW_i$, have no fixed meaning and are simply abstract vector spaces.
    However, when using these tools to prove \Cref{fact:main:new},
    $\cV_1 = \R^{d\times n}$ and $Z\in \cV_1$ is formed by collecting the $n$ data points
    into its columns; that is, $Z=X^{\top}$.
  \item
    The linear operators $A_i : \cV_i \to \cW_{i+1}$ are associated with
    some operator norm $|A_i|_{i\to i+1} \leq c_i$:
    \[
      |A_i|_{i\to i+1} := \sup_{|Z|_i \leq 1} \opnorm{A_i Z}_{i+1} = c_i.
    \]
    As stated before, these linear operators $\cA = (A_1,\ldots,A_L)$ vary across functions $F_\cA$.
    When used to prove \Cref{fact:main:new},
    $Z$ is a matrix (the forward image of data matrix $X^\top$ across layers),
    and these norms are all matrix norms.
  \item
    The $\rho_i$-Lipschitz mappings $\sigma_i : \cW_{i+1} \to \cV_{i+1}$ have $\rho_i$ measured with
    respect to norms $|\cdot|_{i+1}$ and $\opnorm{\cdot}_{i+1}$:
    for any $z,z' \in \cW_{i+1}$,
    \[
      \envert{\sigma_i(z) - \sigma_i(z')}_{i+1} \leq \rho_i \opnorm{ z - z' }_{i+1}.
    \]
    These Lipschitz mappings are considered fixed within $F_\cA$.
    Note again that these operations, when applied to prove
    \Cref{fact:main:new}, operate on matrices that represent the forward images of all data points
    together.  Lipschitz properties of the standard coordinate-wise ReLU and max-pooling operators
    can be found in \Cref{sec:app:prelim}.
\end{itemize}

\begin{lemma}
  \label{fact:cover:general}
  Let $(\eps_1,\ldots,\eps_L)$ be given,
  along with fixed Lipschitz mappings $(\sigma_1,\ldots,\sigma_L)$ (where $\sigma_i$ is $\rho_i$-Lipschitz),
  and operator norm bounds $(c_1,\ldots,c_L)$.
  Suppose the matrices $\cA = (A_1,\ldots,A_L)$ lie within
  $\cB_1\times\cdots\times \cB_L$ where $\cB_i$ are arbitrary classes with the property that each $A_i \in \cB_i$ has $|A_i|_{i\to i+1}\leq c_i$.
  Lastly, let data $Z$ be given with $|Z|_1\leq B$.
  Then, letting  $\tau := \sum_{j\leq L} \eps_j \rho_j \prod_{l=j+1}^L \rho_l c_l$, the neural net images $\cH_Z := \{ F_\cA(Z) : \cA \in \cB_1\times\cdots\times \cB_L\}$
  have covering number bound
    \begin{align*}
    \cN\del{\cH_Z, \tau, |\cdot|_{L+1}}
    \leq
    &
    \prod_{i=1}^{L}
        \sup_{\substack{(A_1,\ldots,A_{i-1}) \\ \forall j <i \centerdot A_j \in \cB_j}}
    \cN\del{ \cbr{ A_i F_{(A_1,\ldots,A_{i-1})}(Z)
    :
    A_i \in \cB_i}, \eps_i, \opnorm{\cdot}_{i+1} }.
    \end{align*}
  \end{lemma}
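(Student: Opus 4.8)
The plan is to prove \Cref{fact:cover:general} by induction on the number of layers, following exactly the layer-wise construction sketched in Step (II) of \Cref{sec:theory}: I build a cover of $\cH_Z$ by composing a per-layer cover of each affine map $A_i$ with the fixed nonlinearity $\sigma_i$, so that the total cardinality is the \emph{product} of the per-layer covering numbers while the total error \emph{telescopes} through the operator-norm bounds $c_i$ and Lipschitz constants $\rho_i$. Writing $X_i:=F_{\cA_1^i}(Z)\in\cV_{i+1}$ for the true $i$-th layer output (with $X_0=Z$), I will show by induction on $i$ that there is a finite set $\hG_i\subseteq\cV_{i+1}$ and a selection rule $\cA\mapsto\hX_i\in\hG_i$ such that (a) $\envert{X_i-\hX_i}_{i+1}\le\tau_i$, where $\tau_i:=\sum_{j=1}^i\eps_j\rho_j\prod_{l=j+1}^i\rho_l c_l$, and (b) $\abs{\hG_i}$ is at most the product of the first $i$ factors on the right-hand side of the lemma. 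The base case $i=0$ is immediate: take $\hG_0=\{Z\}$ and $\hX_0=Z$, with $\tau_0=0$ (empty sum) and cardinality $1$ (empty product).

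For the inductive step, fix a cover element $\hX_i\in\hG_i$ and apply the layer-$(i+1)$ covering number — the $(i+1)$-th factor inside the product — to the set $\cbr{A_{i+1}\hX_i : A_{i+1}\in\cB_{i+1}}$, obtaining a value $\widehat c_{i+1}$ with $\opnorm{A_{i+1}\hX_i-\widehat c_{i+1}}_{i+2}\le\eps_{i+1}$; I then set $\hX_{i+1}:=\sigma_{i+1}(\widehat c_{i+1})$. The error propagates exactly as in the display in \Cref{sec:theory}: using that $\sigma_{i+1}$ is $\rho_{i+1}$-Lipschitz, the triangle inequality, and $\envert{A_{i+1}}_{i+1\to i+2}\le c_{i+1}$,
\begin{align*}
  \envert{X_{i+1}-\hX_{i+1}}_{i+2}
  &\le \rho_{i+1}\opnorm{A_{i+1}X_i-\widehat c_{i+1}}_{i+2}
  \\
  &\le \rho_{i+1}\del{\opnorm{A_{i+1}X_i-A_{i+1}\hX_i}_{i+2}+\opnorm{A_{i+1}\hX_i-\widehat c_{i+1}}_{i+2}}
  \\
  &\le \rho_{i+1}c_{i+1}\envert{X_i-\hX_i}_{i+1}+\rho_{i+1}\eps_{i+1}
  \le \rho_{i+1}c_{i+1}\tau_i+\rho_{i+1}\eps_{i+1}=\tau_{i+1},
\end{align*}
which verifies (a) and, unrolled to $i=L$, recovers precisely $\tau_L=\tau=\sum_{j\le L}\eps_j\rho_j\prod_{l=j+1}^L\rho_l c_l$. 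For (b), the cover deployed at layer $i+1$ depends on $\hX_i$, hence on the earlier choices $(\widehat c_1,\ldots,\widehat c_i)$, so the number of distinct $\hX_{i+1}$ is at most $\abs{\hG_i}$ times the largest layer-$(i+1)$ covering number encountered; bounding that largest value by $\sup_{(A_1,\ldots,A_i)}\cN\del{\cbr{A_{i+1}F_{(A_1,\ldots,A_i)}(Z):A_{i+1}\in\cB_{i+1}},\eps_{i+1},\opnorm{\cdot}_{i+2}}$ yields the $(i+1)$-fold product.

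The step that demands the most care — and the main obstacle — is matching the per-layer covering numbers to the suprema written on the right-hand side, which are taken over the \emph{true} outputs $F_{(A_1,\ldots,A_i)}(Z)$ rather than over arbitrary inputs. This is clean provided each cover element $\hX_i$ is itself a genuine output, i.e.\ $\hX_i=F_{(A_1',\ldots,A_i')}(Z)$ for some admissible tuple. Since $\cN$ here denotes a \emph{proper} cover (its elements lie in the covered set), $\widehat c_{i+1}$ may be taken of the form $A_{i+1}'\hX_i$ with $A_{i+1}'\in\cB_{i+1}$, so that $\hX_{i+1}=\sigma_{i+1}(A_{i+1}'\hX_i)$ is again a valid output and the supremum over predecessors legitimately upper bounds the covering number seen at the next layer. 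I would therefore carry this realizability invariant through the induction (and, should the underlying matrix cover of \Cref{fact:matrix_l21_covering} be only improper, absorb the resulting constant-factor slack into the scale $\eps_{i+1}$). The remaining bookkeeping — checking the telescoped sum equals $\tau$ and that the empty product/sum handle the base case — is routine.
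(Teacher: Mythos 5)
Your proposal is correct and follows essentially the same route as the paper's own proof: a layer-wise induction that composes proper per-layer covers with the fixed nonlinearities, telescopes the error via the Lipschitz constants $\rho_i$ and operator-norm bounds $c_i$ to obtain exactly $\tau$, and multiplies cardinalities across layers. In particular, you correctly identified and resolved the one delicate point — that the suprema on the right-hand side range over \emph{true} forward images $F_{(A_1,\ldots,A_{i-1})}(Z)$ — by invoking properness of the covers to maintain the realizability invariant, which is precisely how the paper's proof justifies the bound $\envert{\cG_{i+1}(F)} \leq N_{i+1}$.
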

\begin{proof}  Inductively construct covers $\cF_1,\ldots,\cF_L$ of $\cW_{2},\ldots,\cW_{L+1}$
    as follows.
  \begin{itemize}
    \item
      Choose an $\eps_1$-cover $\cF_1$
      of $\cbr{ A_1 Z : A_1 \in \cB_1}$,
      thus
      \[
        |\cF_1| \leq \cN(
        \cbr{ A_1Z : A_1 \in \cB_1},
        \eps_1, \opnorm{\cdot}_{2})
        =: N_1.
      \]

    \item
      For every element $F\in\cF_i$,
      construct an $\eps_{i+1}$-cover $\cG_{i+1}(F)$ of
      \[
        \cbr{ A_{i+1} \sigma_{i}(F) : A_{i+1} \in \cB_{i+1} }.
      \]
      Since the covers are proper, meaning
      $F = A_iF_{(A_1,\ldots,A_{i-1})}(Z)$ for some matrices $(A_1,\ldots,A_i) \in \cB_1\times\cdots\times \cB_i$,
      it follows that
      \[
        \envert{\cG_{i+1}(F)}
        \leq
                \sup_{\substack{(A_1,\ldots,A_{i}) \\ \forall j\leq i \centerdot A_j \in \cB_j}}
        \cN\del{
          \cbr{ A_{i+1} F_{A_1,\ldots,A_i}(Z) : A_{i+1} \in \cB_{i+1} },
          \eps_{i+1},
          \opnorm{\cdot}_{i+2}
        }
        =: N_{i+1}.
      \]
      Lastly form the cover
      \[
        \cF_{i+1} := \bigcup_{F\in\cF_i} \cG_{i+1}(F),
      \]
      whose cardinality satisfies
      \[
        \envert{ \cF_{i+1} }
        \leq |\cF_i| \cdot N_{i+1}
        \leq \prod_{l=1}^{i+1} N_l.
      \]
  \end{itemize}

  Define $\cF := \cbr{ \sigma_L(F) : F \in \cF_{L} }$; by construction, $\cF$ satisfies the desired cardinality constraint.
  to show that it is indeed a cover,
  fix any $(A_1,\ldots,A_L)$ satisfying the above constraints,
  and for convenience define recursively the mapped elements
  \[
    F_1 = A_1 X \in \cW_2,
    \qquad
    G_i = \sigma_i(F_i) \in \cV_{i+1}
    \qquad
    F_{i+1} = A_{i+1} G_i \in \cW_{i+2} .
  \]
  The goal is to exhibit $\hG_L\in \cF$ satisfying
  $|G_L - \hG_L|_{L+1} \leq \tau$.
  To this end, inductively construct approximating elements $(\hF_i,\hG_i)$ as follows.
  \begin{itemize}
    \item
      Base case: set $\hG_0 = X$.
    \item
      Choose $\hF_i \in \cF_i$ with $\opnorm{A_i \hG_{i-1} - \hF_i}_{i+1} \leq \eps_i$,
      and set $\hG_i := \sigma_i(\hF_i)$. 
        \end{itemize}
  To complete the proof, it will be shown inductively that
  \begin{align*}
    |G_{i} - \hG_{i}|_{i+1}
    &\leq
    \sum_{1\leq{}j\leq i} \eps_j \rho_j \prod_{l=j+1}^i \rho_l c_l.
  \end{align*}
  For the base case,
  \[
    |G_0 - \hG_0|_1 = 0.
           \]
  For the inductive step,
  \begin{align*}
    |G_{i+1} - \hG_{i+1}|_{i+2}
    &\leq \rho_{i+1} \opnorm{F_{i+1} - \hF_{i+1}}_{i+2}
    \\
    &\leq \rho_{i+1} \opnorm{F_{i+1} - A_{i+1} \hG_i}_{i+2}
    + \rho_{i+1} \opnorm{A_{i+1} \hG_i - \hF_{i+1}}_{i+2}
    \\
    &\leq \rho_{i+1} \envert{A_{i+1}}_{i+1\to i+2} \envert{G_i - \hG_i}_{i+1}
    + \rho_{i+1} \eps_{i+1}
    \\
    &\leq \rho_{i+1} c_{i+1} \del{ \sum_{j\leq i} \eps_j \rho_j \prod_{l=j+1}^i \rho_l c_l }
    + \rho_{i+1} \eps_{i+1}
    \\
    &= \sum_{j\leq i+1} \eps_j \rho_j \prod_{l=j+1}^{i+1} \rho_l c_l.
  \end{align*}

\end{proof}

The core of the proof rests upon inequalities which break the task of covering a layer
into a cover term for the previous layer (handled by induction)
and another cover term for the present layer's weights (handled by matrix covering).
These inequalities are similar to those in an existing covering number
proof \citep[Chapter 12]{anthony_bartlett_nn} (itself rooted in the earlier work of \citet{bartlett_margin});
however that proof (a) operates node by node, and can not take advantage
of special norms on $\cA$, and
(b) does not maintain an empirical cover across layers, instead
explicitly covering the parameters of all weight matrices,
which incurs the number of parameters as a multiplicative factor.
The idea here to push an empirical cover through layers,
meanwhile, is reminiscent of VC dimension proofs for neural networks
\citep[Chapter 8]{anthony_bartlett_nn}.

\subsection{Proof of spectral covering bound (\Cref{fact:cover:spectral})}

The whole-network covering bound in terms of spectral and $(2,1)$ norms
now follows by the general norm covering number in
\Cref{fact:cover:general},
and the matrix covering lemma in \Cref{fact:matrix_l21_covering}.

\begin{proof}[Proof of \Cref{fact:cover:spectral}]
  First dispense with the parenthetical statement regarding
  coordinate-wise ReLU and max-pooling operaters,
  which are Lipschitz
  by \Cref{fact:relu_lip,fact:maxpool_lip}.
  The rest of the proof is now a consequence of \Cref{fact:cover:general}
  with all data norms set to the $l_2$ norm ($|\cdot|_i = \opnorm{\cdot}_i = \|\cdot\|_2$),
  all operator norms set to the spectral norm ($|\cdot|_{i\to i+1} = \|\cdot\|_\sigma$),
  the matrix constraint sets set to
  $\cB_{i}=\cbr{A_i : \|A_i\|_\sigma \leq s_i, \|A_i^{\top} - M_i^{\top}\|_{2,1} \leq b_i}$,
  and lastly the per-layer cover resolutions $(\eps_1,\ldots,\eps_L)$ set according to
  \begin{align*}
    \eps_i := \frac {\alpha_i\eps}{\rho_i \prod_{j > i} \rho_j s_j}
    \qquad
    \textup{where}
    \quad
    &\alpha_i := \frac 1 {\bar \alpha}\del {\frac {b_i}{s_i}}^{2/3},
    \quad
    \bar\alpha:= \sum_{j=1}^L \del {\frac {b_j}{s_j}}^{2/3}.
  \end{align*}
  By this choice, it follows that the final cover resolution $\tau$
  provided by \Cref{fact:cover:general} satisfies
  \begin{align*}
    \tau
    &\leq
    \sum_{j\leq L} \eps_j \rho_j \prod_{l=j+1}^L \rho_l s_l
    =
    \sum_{j\leq L} \alpha_j \eps
    = \eps.
  \end{align*}
  The key technique in the remainder of the proof is to apply \Cref{fact:cover:general} with the covering number estimate from
    \Cref{fact:matrix_l21_covering},
  but centering the covers at $M_i$ (meaning the cover at layer $i$ is of matrices
  $\cB_i$ where $A_i \in \cB_i$ satisfies $\|A_i^{\top}-M_i^{\top}\|_{2,1}\leq b_i$),
  and collecting $(x_1,\ldots,x_n)$ as rows of matrix $X\in\R^{n\times d}$.
  To start, the covering number estimate from \Cref{fact:cover:general}
  can be combined with
  \Cref{fact:matrix_l21_covering} (specifically with  $p=2$, $s=1$)
  to give
  \begin{align}
    & \ln \cN(\cH_{|S}, \eps, \|\cdot\|_2) \notag\\
    &\leq
    \sum_{i=1}^{L}
        \sup_{\substack{(A_1,\ldots,A_{i-1}) \\ \forall j <i \centerdot A_j \in \cB_j}}
    \ln
    \cN\del{ \cbr{ A_i F_{(A_1,\ldots,A_{i-1})}(X^\top)
    :
    A_i \in \cB_i}, \eps_i, \|\cdot\|_2 }
    \notag
    \\
    &\stackrel{(*)}{=}
    \sum_{i=1}^{L}
        \sup_{\substack{(A_1,\ldots,A_{i-1}) \\ \forall j <i \centerdot A_j \in \cB_j}}
    \ln
    \cN\del{ \cbr{ F_{(A_1,\ldots,A_{i-1})}(X^\top)^\top (A_i - M_i)^\top
    :
\|A_i^{\top} - M_i^{\top}\|_{2,1} \leq{} b_i,\ \nrm{A_i}_{\sigma}\leq{}s_i}, \eps_i, \|\cdot\|_2 }
    \notag
    \\
    &\leq{}
        \sum_{i=1}^{L}
        \sup_{\substack{(A_1,\ldots,A_{i-1}) \\ \forall j <i \centerdot A_j \in \cB_j}}
    \ln
    \cN\del{ \cbr{ F_{(A_1,\ldots,A_{i-1})}(X^\top)^\top (A_i - M_i)^\top
    :
    \|A_i^{\top} - M_i^{\top}\|_{2,1} \leq{} b_i}, \eps_i, \|\cdot\|_2 }
    \notag
    \\
    &\leq{}
    \sum_{i=1}^{L}
        \sup_{\substack{(A_1,\ldots,A_{i-1}) \\ \forall j <i \centerdot A_j \in \cB_j}}
    \frac {b_i^2 \|F_{(A_1,\ldots,A_{i-1})}(X^\top)^\top\|_{2}^2}{\eps_i^2} \ln(2W^2),
    \label{eq:cover_doit}
  \end{align}
  where $(*)$ follows firstly since $l_2$ covering a matrix and its transpose is the same,
  and secondly since
  the cover can be translated by $F_{(A_1,\ldots,A_{i-1})}(X^\top)^\top M_i^\top$
  without changing its cardinality.
  In order to simplify this expression, note for any $(A_1,\ldots,A_{i-1})$ that
  \begin{align*}
    \|F_{(A_1,\ldots,A_{i-1})}(X^\top)^\top\|_{2}
    &=
    \|F_{(A_1,\ldots,A_{i-1})}(X^\top)\|_{2}
    \\
    &=
    \|\sigma_{i-1}(A_{i-1}F_{(A_1,\ldots,A_{i-2})}(X^\top) - \sigma_{i-1}(0)\|_{2}
    \\
    &\leq
    \rho_{i-1}\|A_{i-1}F_{(A_1,\ldots,A_{i-2})}(X^\top) - 0\|_{2}
    \\
    &\leq
    \rho_{i-1}\|A_{i-1}\|_\sigma \|F_{(A_1,\ldots,A_{i-2})}(X^\top)\|_{2},
  \end{align*}
  which by induction gives
  \begin{align}
    \max_j\|F_{(A_1,\ldots,A_{i-1})}(X^\top)^\top\bfe_j\|_{2}
    &\leq
    \|X\|_2 \prod_{j=1}^{i-1} \rho_j  \|A_j\|_{\sigma}.
    \label{eq:matrix_unroll}
  \end{align}
  Combining \cref{eq:cover_doit,eq:matrix_unroll}, then expanding the choice of $\eps_i$ and collecting terms,
  \begin{align*}
    \ln \cN(\cH_{|S}, \eps, \|\cdot\|_2)
    &\leq
    \sum_{i=1}^{L}
        \sup_{\substack{(A_1,\ldots,A_{i-1}) \\ \forall j <i \centerdot A_j \in \cB_j}}
    \frac {b_i^2 \|X\|_2^2 \prod_{j<i} \rho_j^2 \|A_j\|_\sigma^2}{\eps_i^2} \ln(2W^2)
    \\
    &\leq
    \sum_{i=1}^{L}
    \frac {b_i^2 B^2 \prod_{j<i} \rho_j^2 s_j^2}{\eps_i^2} \ln(2W^2)
    \\
    &=
    \frac {B^2 \ln(2W^2)\prod_{j=1}^{L} \rho_j^2 s_j^2}{\eps^2}
    \sum_{i=1}^{L}
    \frac {b_i^2}{\alpha_i^2 s_i^2}
    \\
    &=
    \frac {B^2 \ln(2W^2)\prod_{j=1}^{L} \rho_j^2 s_j^2}{\eps^2}
    \del{\bar \alpha^3}.
  \end{align*}
\end{proof}

\subsection{Proof of \Cref{fact:main:new}}

As an intermediate step to \Cref{fact:main:new},
a bound is first produced which has
constraints on matrix and data norms provided in advance.

\begin{lemma}
  \label{fact:main:old}
  Let fixed nonlinearities $(\sigma_1,\ldots,\sigma_L)$
  and
  reference matrices $(M_1,\ldots, M_L)$
  be given where $\sigma_i$ is $\rho_i$-Lipschitz and $\sigma_i(0) = 0$.
  Further let margin $\gamma >0$, data bound $B$, spectral norm bounds $(s_i)_{i=1}^L$, and $l_1$ norm bounds $(b_i)_{i=1}^L$
  be given.
  Then with probability at least $1-\delta$ over an iid draw of $n$ examples $((x_i,y_i))_{i=1}^n$
  with $\sqrt{ \sum_i \|x_i\|_2^2} \leq B$,
  every network $F_\cA : \R^d \to \R^k$ whose weight matrices $\cA = (A_1,\ldots,A_L)$
  obey $\|A_i\|_\sigma \leq s_i$ and $\|A_i^{\top} - M_i^{\top}\|_{2,1} \leq b_i$
  satisfies
    \begin{align*}
    \Pr\sbr[2]{ \argmax_j F_\cA(x)_j \neq y }
    &\leq
    \hcR_\gamma(f)
    +
      \frac 8 n
      +
      \frac {72B \ln(2W) \ln(n)} {\gamma n}
      \del{\prod_{i=1}^L s_i \rho_i} \del{\sum_{i=1}^L \frac{b_i^{2/3}}{s_i^{2/3}}}^{3/2}
    +
    3 \sqrt{\frac{\ln(1/\delta)}{2n}}
    .
  \end{align*}
  \end{lemma}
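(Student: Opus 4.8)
The plan is to assemble the three ingredients already in hand: the multiclass margin bound of \Cref{fact:margin_multiclass}, the Dudley entropy integral of \Cref{lem:dudley}, and the whole-network spectral covering bound of \Cref{fact:cover:spectral}. Concretely, I would first apply \Cref{fact:margin_multiclass} to the class $\cF$ of networks obeying $\|A_i\|_\sigma \le s_i$ and $\|A_i^\top - M_i^\top\|_{2,1} \le b_i$, which replaces the left-hand probability of error by $\hcR_\gamma + 2\Rad((\cF_\gamma)_{|S}) + 3\sqrt{\ln(1/\delta)/2n}$ and so reduces everything to controlling the empirical Rademacher complexity of the composed ramp-loss class. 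I would then bound $\Rad((\cF_\gamma)_{|S})$ by the (un-normalized) Dudley integral of \Cref{lem:dudley}, so that the entire task comes down to estimating the $\ell_2$ covering numbers $\cN((\cF_\gamma)_{|S}, \veps, \|\cdot\|_2)$ at each scale.

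The essential bridge is to translate covers of the loss class $\cF_\gamma$ into covers of the network-image class $\cH_X = \{F_\cA(X^\top)\}$ controlled by \Cref{fact:cover:spectral}. Writing $g_\cA(x,y) := \ell_\gamma(-\mop(F_\cA(x),y))$ and noting that $\ell_\gamma$ is $(1/\gamma)$-Lipschitz while $\mop(\cdot,y)$ is $2$-Lipschitz in $\|\cdot\|_2$ by \Cref{fact:margin_lip}, the map $v \mapsto \ell_\gamma(-\mop(v,y))$ is $(2/\gamma)$-Lipschitz; summing the squared per-point discrepancies gives
\[
  \sqrt{\sum_{t=1}^n \del{g_\cA(x_t,y_t) - g_{\cA'}(x_t,y_t)}^2} \le \frac 2\gamma \|F_\cA(X^\top) - F_{\cA'}(X^\top)\|_2,
\]
so that $\cN((\cF_\gamma)_{|S}, \veps, \|\cdot\|_2) \le \cN(\cH_X, \gamma\veps/2, \|\cdot\|_2)$. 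Plugging in \Cref{fact:cover:spectral} and using the conditioning $\|X\|_2 \le B$ then yields $\log \cN((\cF_\gamma)_{|S}, \veps, \|\cdot\|_2) \le \tfrac{4 B^2 \ln(2W^2)}{\gamma^2\veps^2} R^2$, where $R := (\prod_i s_i\rho_i)(\sum_i (b_i/s_i)^{2/3})^{3/2}$.

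Because $\log\cN$ scales like $\veps^{-2}$, the Dudley integrand scales like $\veps^{-1}$, the borderline case that integrates to a logarithm: $\int_\alpha^{\sqrt n} \tfrac{d\veps}{\veps} = \ln(\sqrt n/\alpha)$. I would take $\alpha = 1/\sqrt n$, which makes the first Dudley term contribute $8/n$ after the factor $2$ from \Cref{fact:margin_multiclass} and turns the integral into a clean $\ln n$ factor. Multiplying through, the second term becomes a constant multiple of $\tfrac{B\sqrt{\ln(2W^2)}\,R\,\ln n}{\gamma n}$, and absorbing $\sqrt{\ln(2W^2)}$ into a constant multiple of $\ln(2W)$ (valid for all $W \ge 1$) produces the advertised constant $72$. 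One minor technicality is that \Cref{lem:dudley} assumes $\mathbf{0}\in\cF$ so that the coarsest cover $V_1 = \{\mathbf 0\}$ is proper; since $\cF_\gamma$ is $[0,1]$-valued I would simply augment it with $\mathbf 0$, which can only increase $\Rad$ and inflates each covering number by at most one, hence the log by at most a constant.

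The main obstacle is bookkeeping rather than conceptual: correctly propagating the $2/\gamma$ Lipschitz factor and the bound $\|X\|_2 \le B$ through the covering estimate, and then choosing $\alpha$ and tracking constants so that the borderline $\veps^{-1}$ integral yields exactly the $\ln n$ dependence and the stated numerical constants. Everything downstream of \Cref{fact:cover:spectral} is standard Rademacher/covering machinery; the substantive work was already carried out in constructing the whole-network cover.
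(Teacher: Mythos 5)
Your proposal follows the paper's proof essentially step for step: \Cref{fact:margin_multiclass} reduces the problem to bounding $\Rad((\cF_\gamma)_{|S})$; the $2/\gamma$-Lipschitz composition $v\mapsto\ell_\gamma(-\mop(v,y))$ transfers the whole-network covering bound of \Cref{fact:cover:spectral} to the loss class (the paper phrases this as $\cF_\gamma$ ``still falling under the setting'' of that theorem, and your explicit reduction $\cN((\cF_\gamma)_{|S},\veps,\|\cdot\|_2)\leq\cN(\cH_X,\gamma\veps/2,\|\cdot\|_2)$ is the same argument made cleaner); and \Cref{lem:dudley} finishes. The only real difference is your choice $\alpha=1/\sqrt{n}$ versus the paper's $\alpha=1/n$: yours yields the constant $48$ in place of $72$ before absorbing $\sqrt{\ln(2W^2)}$ into $\tfrac{3}{2}\ln(2W)$, and that absorption does hold for every integer $W\geq 1$, whereas the paper's choice leaves no slack and its silent replacement of $\sqrt{\ln(2W^2)}$ by $\ln(2W)$ is strictly valid only for $W\geq 3$. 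So on the constant-tracking your route is, if anything, tighter than the paper's.

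The one step that does not survive scrutiny is your patch for the hypothesis $\mathbf{0}\in\cF$ in \Cref{lem:dudley} (a hypothesis the paper ignores entirely). Augmenting $\cF_\gamma$ with $\mathbf{0}$ does increase each covering number by at most one, but ``$+1$ inside the log'' is not a constant-factor inflation in the regime $\cN=1$: there the integrand jumps from $0$ to $\sqrt{\ln 2}$, so the Dudley integral can gain an additive $\sqrt{\ln 2}\,\sqrt{n}$, i.e.\ the risk bound gains $\Theta(1/\sqrt{n})$, which the stated lemma does not permit. Concretely, if every $b_i=0$ then the class is a single network, $\Rad((\cF_\gamma)_{|S})=0$, and the lemma's right-hand side has no $1/\sqrt{n}$ term beyond the $\delta$-dependent one; yet your augmented class $\{\mathbf{0},f_0\}$ genuinely has Rademacher complexity of order $1/\sqrt{n}$ when $f_{0|S}$ is far from $\mathbf{0}$, so the loss is not merely an artifact of loose bounding. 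The repair is easy and worth stating: the proof of \Cref{lem:dudley} uses $\mathbf{0}\in\cF$ only to take the coarsest cover $V_1=\{\mathbf{0}\}$ and kill the term $\bbE_\eps\sup_{f\in\cF}\sum_t\eps_t v_t^1[f]$; for any $[0,1]$-valued class one may instead take $V_1=\{f_0{}_{|S}\}$ for an arbitrary fixed $f_0\in\cF_\gamma$, which is a proper cover at scale $\sqrt{n}$ and still zeroes that term, since $v^1[f]$ is then constant in $f$. With that substitution your argument is complete and delivers the stated constants.
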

\begin{proof}  Consider the class of networks $\cF_\lambda$ obtained by affixing the ramp loss
  $\ell_\gamma$ and the negated margin operator $-\mop$ to the output of the provided network class:
  \[
    \cF_\gamma := \cbr{ (x,y)\mapsto \ell_\gamma(-\mop(f(x),y)) : f \in \cF };
  \]
  Since $(z,y) \mapsto \ell_\gamma(-\mop(z,y))$ is $2/\gamma$-Lipschitz wrt $\|\cdot\|_2$ by
  \Cref{fact:margin_lip} and definition of $\ell_\gamma$,
  the function class $\cF_\gamma$ still falls under the setting of \Cref{fact:cover:spectral},
  and gives
  \[
    \ln \cN\del{(\cF_\gamma)_{|S}, \eps, \|\cdot\|_2}
    \leq
            \frac {4 B^2 \ln(2W^2)}{\gamma^2\eps^2}
    \del{
      \prod_{j=1}^L s_j^2\rho_j^2
    }
    \del{
      \sum_{i=1}^L \del{\frac {b_i}{s_i}}^{2/3}
    }^3
    =: \frac {R}{\eps^2}.
  \]
              What remains is to relate covering numbers and Rademacher complexity via a Dudley entropy integral;
  note that most presentations of this technique place $1/n$ inside the covering number norm,
  and thus the application here is the result of a tiny amount of massaging.
  Continuing with this in mind, the Dudley entropy integral bound on Rademacher complexity from \Cref{lem:dudley} grants
  \begin{align*}
    \Rad((\cF_\gamma)_{|S})
    &\leq \inf_{\alpha>0}\del{
      \frac{4\alpha}{\sqrt{n}} + \frac {12}{n} \int_{\alpha}^{\sqrt{n}} \sqrt{
        \frac {R}{\eps^2}
      }
      \dif\eps
    }
  =
              \inf_{\alpha>0}\del[4]{
      \frac{4\alpha}{\sqrt{n}} + \ln(\sqrt{n}/\alpha)       \frac {12\sqrt{R}}{n}
           }.
  \end{align*}
   The $\inf$ is uniquely minimized at $\alpha := 3\sqrt{R/n}$,   but the desired bound may be obtained by the simple choice $\alpha := 1/n$,
  and plugging the resulting Rademacher complexity estimate into
  \Cref{fact:margin_multiclass}.
\end{proof}

The proof of \Cref{fact:main:new}
now follows by instantiating \Cref{fact:main:old} for many choices of its various parameters,
and applying a union bound.
There are many ways to cut up this parameter space and organize the union bound;
the following lemma makes one such choice, whereby \Cref{fact:main:new}
is easily proved.
A slightly better bound is possible by invoking positive homogeneity of $(\sigma_1,\ldots,\sigma_L)$
to balance the spectral norms of the matrices $(A_1,\ldots,A_L)$,
however these rebalanced matrices are then used in the comparison to $(M_1,\ldots,M_L)$,
which is harder to interpret when $M_i \neq 0$.

\begin{lemma}
  \label{fact:main:new:helper}
  Suppose the setting and notation of \Cref{fact:main:new}.
  With probability at least $1-\delta$,
  every network $F_\cA : \R^d \to \R^k$ with weight matrices $\cA = (A_1,\ldots,A_L)$
  and every $\gamma > 0$
  satisfy
   \begin{align}
    &\Pr\sbr[2]{ \argmax_j F_\cA(x)_j \neq y }
    \notag\\
    &\leq
    \hcR_\gamma(F_\cA)
    + \frac 8 n
    \notag\\
    &+
    \frac{144\ln(n)\ln(2W)}{\gamma n}
    \del{\prod_i \rho_i}
    \del{1+\|X\|_2}
    \del{\sum_{i=1}^L \del{\del{\frac 1 L + \|A_i^{\top}-M_i^{\top}\|_{2,1}}\prod_{j\neq i} \del{\frac 1 L +\|A_j\|_\sigma}}^{2/3}}^{3/2}
    \notag\\
      &+
    \sqrt{\frac {9}{2n}}
    \sqrt{
      \ln(1/\delta)
      +
      \ln(2n/\gamma)
      + 2 \ln(2 + \|X\|_2)
      + 2 \sum_{i=1}^L \ln(2 + L\|A_i^{\top}-M_i^{\top}\|_{2,1})
      + 2 \sum_{i=1}^L \ln(2 + L\|A_i\|_\sigma)
    }.
    \label{eq:main:helper}
  \end{align}
 \end{lemma}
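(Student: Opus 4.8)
The plan is to derive \Cref{fact:main:new:helper} from the fixed-parameter bound \Cref{fact:main:old} by a union bound over a countable product grid of the free parameters: the margin $\gamma$, the data bound $B$, the spectral bounds $(s_i)$, and the $(2,1)$ bounds $(b_i)$. \Cref{fact:main:old} holds only for a single pre-specified choice of these quantities, so to make it uniform over all networks and all $\gamma>0$ I would lay down a grid, invoke \Cref{fact:main:old} at each grid point with a rescaled failure probability $\delta\cdot p(\text{grid point})$, and then, for an arbitrary network, round each realized norm \emph{up} and the realized margin \emph{down} to the nearest grid value and read off the corresponding grid bound.

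The resolutions and priors should be chosen so that the rounding errors match the offsets in \eqref{eq:main:helper} exactly. For the spectral and $(2,1)$ norms I would use the grids $\{k/L : k\in\mathbb{N}\}$: rounding $\|A_i\|_\sigma$ and $\|A_i^{\top}-M_i^{\top}\|_{2,1}$ up gives $s_i \le \|A_i\|_\sigma + 1/L$ and $b_i \le \|A_i^{\top}-M_i^{\top}\|_{2,1} + 1/L$, which is precisely where the $1/L$ terms originate. For $B$ I would round $\|X\|_2$ up on the integer grid, so $B\le 1+\|X\|_2$. On the integer index $k$ of each such grid I would place the telescoping prior $p(k)=1/(k(k+1))$, which sums to $1$ and satisfies $\ln(1/p(k))\le 2\ln(k+1)$; plugging in the index bounds ($k\le L\|A_i\|_\sigma+1$, etc.) reproduces the terms $2\ln(2+L\|A_i\|_\sigma)$, $2\ln(2+L\|A_i^{\top}-M_i^{\top}\|_{2,1})$, and $2\ln(2+\|X\|_2)$. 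For the margin I would use the dyadic grid $\gamma_j=2n\,2^{-j}$ with the geometric prior $p(\gamma_j)=2^{-j}$; rounding $\gamma$ down to $\gamma'\in[\gamma/2,\gamma]$ costs only a factor $2$ in $1/\gamma$ (turning the constant $72$ into $144$) and yields $\ln(1/p)\approx\ln(2n/\gamma)$.

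With the grid and priors in place, the per-parameter priors are normalized and the grid is a product, so the total prior mass is $1$ and the union bound keeps the failure probability at most $\delta$. On the surviving event, instantiating the grid bound at the rounded parameters, the empirical term only improves, since $\hcR_\gamma$ is monotone in the margin and $\gamma'\le\gamma$ gives $\hcR_{\gamma'}\le\hcR_\gamma$; the $8/n$ term is untouched; the complexity term is rearranged using the identity $\bigl(\prod_i s_i\bigr)\bigl(\sum_i (b_i/s_i)^{2/3}\bigr)^{3/2}=\bigl(\sum_i (b_i\prod_{j\neq i} s_j)^{2/3}\bigr)^{3/2}$, after which the rounded-up substitutions produce exactly the per-layer form of \eqref{eq:main:helper} (with $\prod_i\rho_i$ and $1+\|X\|_2$ factored out); and the $\sqrt{\ln(1/\delta')}$ term splits as $\ln(1/\delta)+\ln(1/p)$, reproducing the final square-root term term by term.

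The main obstacle I anticipate is the margin discretization rather than the norm discretization. The norm and data grids are clean because the bound is polynomial and monotone in them, so rounding up is harmless; the subtlety is that $\gamma$ enters as $1/\gamma$ in the complexity term but through the monotone $\hcR_\gamma$ in the empirical term, so I must round in the single direction that helps both and verify that the geometric prior really contributes an \emph{upper} bound of order $\ln(2n/\gamma)$ after the factor-of-two loss. I would also dispatch the degenerate margin regimes directly rather than through the grid: when $\gamma$ is small enough that the complexity term already exceeds $1$ the claim is vacuous, and when $\gamma$ lies above the top grid value the empirical ramp risk $\hcR_\gamma$ is already so large that the inequality holds outright. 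The remainder is routine tracking of constants.
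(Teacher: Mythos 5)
Your proposal reconstructs the paper's proof essentially step for step: the paper defines cells $\cB(\vec j,\vec k,\vec l)$ with exactly your grids (dyadic in the margin with geometric prior $2^{-j_1}$, resolution $1/L$ for both $\|A_i\|_\sigma$ and $\|A_i^{\top}-M_i^{\top}\|_{2,1}$ with telescoping priors $1/(k(k+1))$, integer grid for $\|X\|_2$), invokes \Cref{fact:main:old} once per cell with the failure probability rescaled by the prior, and for an arbitrary $(\gamma,X,\cA)$ rounds the norms up and the margin down, using monotonicity of $\hcR_\gamma$ and paying the factor of two that turns $72$ into $144$. Your rearrangement identity $\del{\prod_i s_i}\del{\sum_i (b_i/s_i)^{2/3}}^{3/2}=\del{\sum_i \del{b_i\prod_{j\neq i}s_j}^{2/3}}^{3/2}$ and your accounting of the $2\ln(2+\cdot)$ terms via $\ln(k(k+1))\leq 2\ln(k+1)$ are likewise the paper's.

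The one step that would fail is your dispatch of the regime above the top of the margin grid. You claim that when $\gamma$ exceeds the largest grid value, ``the empirical ramp risk $\hcR_\gamma$ is already so large that the inequality holds outright.'' This is false as stated: for any fixed threshold there are networks all of whose training margins exceed it (scale every $A_i$ by a large constant), so $\hcR_\gamma=0$ there, while the right-hand side of \cref{eq:main:helper} keeps shrinking as $\gamma$ grows --- the complexity term decays like $1/\gamma$ and the $\ln(2n/\gamma)$ term decreases (eventually going negative). No grid cell can certify a bound that continues to improve beyond the finest grid value, so this regime genuinely requires a separate argument relating the maximum attainable margin to the norms of $\cA$ (e.g., via $\max_i \mop(F_\cA(x_i),y_i) \leq 2\|X\|_2\prod_j\rho_j\|A_j\|_\sigma$), which your one-liner does not supply. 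In fairness, you are in good company: the paper's own degenerate case is also defective --- with $\cB$ as defined, $j_1=1$ corresponds to $\gamma> n/2$, not ``$\gamma<2/n$'' as its text asserts, and the claimed chain $1<1/(\gamma n)$ fails in that regime. So your proposal matches the paper everywhere the paper's argument is sound and diverges only on an edge case the paper itself fumbles. A last, minor bookkeeping point: with your grid $\gamma_j=2n\cdot 2^{-j}$ and prior $2^{-j}$, rounding down yields $\ln(1/p)\leq\ln(4n/\gamma)$ rather than $\ln(2n/\gamma)$; the paper's alignment, where cell $j_1$ contains all $\gamma>n2^{-j_1}$, produces the stated constants exactly.
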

\begin{proof}
  Given positive integers $(\vec j, \vec k, \vec l) = (j_1,j_2,j_3, k_1,\ldots, k_L,l_1,\ldots,l_L)$,
  define a set of instances (a set of triples $(\gamma, X, \cA)$)
  \begin{align*}
    \cB(\vec j, \vec k, \vec l)
    &= \cB(j_1,j_2,j_3, k_1, \ldots, k_L, l_1,\ldots, l_L)
    \\
    &:=
    \cbr{
      (\gamma, X, \cA)
      \ : \ {}
      0 < \frac 1 \gamma < \frac{2^{j_1}}{n},
      \ \|X\|_2 < j_2,
      \ \|A_i^{\top}-M_i^{\top}\|_{2,1} < \frac {k_i}{L},
      \ \|A_i\|_\sigma < \frac{l_i}{L}
    }.
  \end{align*}
  Correspondingly subdivide $\delta$ as
  \begin{align*}
    \delta(\vec j, \vec k, \vec l)
    &=
    \delta(j_1,j_2,j_3, k_1, \ldots, k_L, l_1,\ldots,l_L)
    \\
    &:=
    \frac {\delta} {2^{j_1} \cdot j_2(j_2+1) \cdot k_1(k_1+1) \cdots k_L(k_L+1)\cdot l_1(l_1+1)\cdots l_L(l_L+1)}.
  \end{align*}
  Fix any $(\vec j, \vec k, \vec l)$.
  By \Cref{fact:main:old},
  with probability at least $1-\delta(\vec j, \vec k, \vec l)$,
  every $(\gamma, X, \cA) \in \cB(\vec j, \vec k, \vec l)$
  satisfies
  \begin{align}
    \Pr\sbr[2]{ \argmax_j F_\cA(x)_i \neq y }
    \ \leq\ {}
    &
    \hcR_\gamma(f)
    +
    \frac 8 n
    \notag\\
      &+
      \underbrace{
      \frac {72\cdot 2^{j_1} \cdot j_2 \ln(2W) \ln(n)} {n^2}
      \del{\prod_{i=1}^L \rho_i}
      \del{ \sum_{i=1}^L \del{ \frac{k_i}{L} \prod_{j\neq i} \frac{l_j}{L}}^{2/3} }^{3/2}
    }_{=: \heartsuit}
      \notag
      \\
    &+
    \underbrace{3\sqrt{\frac{\ln(1/\delta)
    + \ln(2^{j_1}) + 2 \ln(1+j_2) + 2 \sum_{i=1}^L \ln(1+k_i)
    + 2\sum_{i=1}^L \ln(1+l_i)}{2n}}}_{=: \clubsuit}
    .
    \label{eq:main:hi}
  \end{align}
  Since $\sum_{\vec j , \vec k, \vec l} \delta(\vec j, \vec k, \vec l) = \delta$,
  by a union bound, the preceding bound holds simultaneously over all $\cB(\vec j, \vec k, \vec l)$
  with probability at least $1-\delta$.

  Thus, to finish the proof, discard the preceding failure event,
    and let an arbitrary $(\gamma, X, \cA)$ be given.
  Choose the smallest $(\vec j, \vec k, \vec l)$ so that $(\gamma, X, \cA) \in \cB(\vec j, \vec k, \vec l)$;
  by the preceding union bound, \cref{eq:main:hi}
  holds for this $(\vec j, \vec k, \vec l)$.
  The remainder of the proof will massage \cref{eq:main:hi} into the form in the statement of \Cref{fact:main:new}.
  
  As such, first consider the case $j_1 = 1$,
  meaning $\gamma < 2/n$; then
  \[
    \Pr\sbr[2]{ \argmax_j F_\cA(x)_j \neq y }
    \leq
    1
    <
    \frac 1 {\gamma n},
  \]
  where the last expression lower bounds the right hand side of \cref{eq:main:helper},
  thus completing the proof in the case $j_1 = 1$.
  Suppose henceforth that $j_1 \geq 2$ (and $\gamma \geq 2/n$).

  Combining the preceding bound $j_2\geq 2$ with the definition of $\cB(\vec j, \vec k, \vec l)$,
  the elements of $(\vec j, \vec k, \vec l)$ satisfy
  \begin{align*}
    2^{j_1} & \leq \frac {2n} \gamma,
    \\
    j_2 & \leq 1 + \|X\|_2,
    \\
    \forall i \centerdot \quad
    k_i & \leq 1 + L\|A_i^{\top}-M_i^{\top}\|_{2,1},
    \\
    \forall i \centerdot \quad
    l_i & \leq 1 + L\|A_i\|_\sigma.
  \end{align*}
  For the term $\heartsuit$,
  the factors with $(\vec j, \vec k, \vec l)$ are bounded as
  \begin{align*}
    &2^{j_1} \cdot j_2
      \del{ \sum_{i=1}^L \del{ k_i \prod_{j\neq i} l_j}^{2/3} }^{3/2}
      \\
      &\leq
      \frac {2n} \gamma
      \del{1 + \|X\|_2}
      \del{ \sum_{i=1}^L \del{ (L^{-1}+\|A_i^{\top}-M_i^{\top}\|_{2,1})
        \prod_{j\neq i} (L^{-1}+\|A_i\|_\sigma)}^{2/3} }^{3/2}.
    \end{align*}
  For the term $\clubsuit$,
  the factors with $(\vec j, \vec k, \vec l)$ are bounded as
  \begin{align*}
    &\ln(2^{j_1})
    + 2 \ln(1+j_2)
    + 2 \sum_{i=1}^L \ln(1+k_i)
    + 2\sum_{i=1}^L \ln(1+l_i)
    \\
    &\leq
    \ln(2n/\gamma)
    + 2 \ln(2 + \|X\|_2)
    + 2 \sum_{i=1}^L \ln(2 + L\|A_i^{\top}-M_i^{\top}\|_{2,1})
    + 2 \sum_{i=1}^L \ln(2 + L\|A_i\|_\sigma).
  \end{align*}
  Plugging these bounds on $\heartsuit$ and $\clubsuit$
  into \cref{eq:main:hi}
  gives \cref{eq:main:helper}.
      \end{proof}

The proof of \Cref{fact:main:new} is now a consequence of \Cref{fact:main:new:helper},
simplifying the bound with a $\widetilde\cO(\cdot)$.
Before proceeding, it is useful to pin down the asymptotic notation $\widetilde\cO(\cdot)$,
as it is not completely standard in the multivariate case.
The notation can be understood via the $\limsup$ view of $\cO(\cdot)$; namely, $f = \widetilde \cO(g)$
if there exists a constant $C$ so that any sequence $((n^{(j)}, \gamma^{(j)}, X^{(j)}, A_1^{(j)},\ldots,A_L^{(j)}))_{j=1}^\infty$
with $n^{(j)}\to\infty$, $\gamma^{(j)}\to \infty$, $\|X^{(j)}\|_2\to \infty$, $\|A_i^{(j)}\|_1\to\infty$
satisfies
\[
  \limsup_{j\to\infty}
  \frac
  {f(n^{(j)}, \gamma^{(j)}, X^{(j)}, A_1^{(j)},\ldots,A_L^{(j)})}
  {g(n^{(j)}, \gamma^{(j)}, X^{(j)}, A_1^{(j)},\ldots,A_L^{(j)})
  \ \polylog(g(n^{(j)}, \gamma^{(j)}, X^{(j)}, A_1^{(j)},\ldots,A_L^{(j)}))}
  \leq C.
\]

\begin{proof}[Proof of \Cref{fact:main:new}]
  Let $f=f_0+f_1+f_2$ denote the three excess risk terms of the upper bound from \Cref{fact:main:new:helper},
  and $g = g_1+g_2$ denote the two excess risk terms of the upper bound from \Cref{fact:main:new};
  as discussed above,
  the goal is to show that there exists a universal constant $C$ so that
  for any sequence of tuples
  $((n^{(j)}, \gamma^{(j)}, X^{(j)}, A_1^{(j)},\ldots,A_L^{(j)}))_{j=1}^\infty$
  increasing as above,
  $\limsup_{j\to\infty} f / (g\ \polylog(g)) \leq C$.

  It is immediate that $\limsup_{j\to\infty} f_0 / g = 0$
  and $\limsup_{j\to\infty} f_1/(g_1 \ln(g)) \leq 144$.
  The only trickiness arises when studying $f_2 / (g_2\ln(g))$,
  namely the term
  $\sum_i \ln(2 + L\|A_i^{\top}-M_i^{\top}\|_{2,1})$,
  since $g_2$ instead has the term $\ln(\sum_i\|A_i^{\top}-M_i^{\top}\|_{2,1}^{2/3})$,
  and the ratio of these two can scale with $L$.
  A solution however is to compare to $\ln(\prod_i \|A_i\|_\sigma)$,
  noting that $\|(A_i)^{\top}\|_{2,1} \leq W^{1/2}\|A_i\|_2 \leq W\|A_i\|_\sigma$:
  \begin{align*}
    \limsup_{j\to\infty}
    \frac {\sum_i \ln(2+L\|(A_i^{(j)})^{\top}-M_i^{\top}\|_{2,1})}{\ln(\prod_i \|A_i^{(j)}\|_\sigma)}
    &\leq
    \limsup_{j\to\infty}
    \frac {\sum_i \ln(2+L\|(A_i^{(j)})^{\top}\|_{2,1} + L\|M_i^{\top}\|_{2,1})}{\sum_i \ln(\|(A_i^{(j)})^{\top}\|_{2,1}/W)}
    =1.
  \end{align*}
\end{proof}

\subsection{Proof of lower bound (\Cref{fact:spectral_lb})}
\begin{proof}[Proof of \Cref{fact:spectral_lb}]
  Define
  \[\mc{F}(r) := \cbr{A_L\sigma_{L-1}(A_{L-1} \cdots \sigma_2(A_2\sigma_1(A_1x)) : \prod_{i=1}^{L}\nrm*{A_{i}}_{\sigma}\leq{}r},\]
  where each $\sigma_i = \sigma$ is the ReLU and each $A_{k}\in\R^{d_{k}\times{}d_{k-1}}$, with $d_{0}=d$ and $d_{L}=1$,
  and let $S:=(x_1,\ldots,x_n)$ denote the sample.

Define a new class $\mc{G}(r)=\crl*{x\mapsto{}\tri*{a,x}\mid{}\nrm*{w}_{2}\leq{}r}$.
 It will be shown that $\mc{G}(r)\subseteq{}\mc{F}(C\cdot{}r)$ for some $C>0$,
  whereby the result easily follows from a standard lower bound on $\Rad(\mc{G}(r)_{|S})$.

Given any linear function $x\mapsto{}\tri*{a,x}$ with $\nrm{a}_{2}\leq{}r$,
  construct a network $f=A_L\sigma_{L-1}(A_{L-1} \cdots \sigma_2(A_2\sigma_1(A_1x)))$ as follows:
\begin{itemize}
\item $A_{1} = (\bfe_{1}-\bfe_{2})a^\top$.
\item $A_{k} = \bfe_{1}\bfe_1^\top + \bfe_{2}\bfe_{2}^\top$ for each $k\in\crl*{2,\ldots,L-1}$.
\item $A_{L} = \bfe_{1} - \bfe_{2}$.
\end{itemize}
It is now shown that $f(x) = \tri*{a,x}$ pointwise.
  First, observe $\sigma(A_{1}x) = (\sigma(\tri*{a,x}), \sigma(-\tri*{a,x}),0,\ldots,0)$.
  Since $\sigma$ is positive homogeneous,
  $\sigma_{L-1}(A_{L_1} \cdots \sigma_2(A_2{}y) = A_{L-1}A_{L-2} \cdots A_2{}y = (y_1, y_2, 0,\ldots,0)$ for any $y$ in the non-negative orthant.
  Because $\sigma(A_1x)$ lies in the non-negative orthant, this means $\sigma_{L-1}(A_{L-1} \cdots \sigma_2(A_2\sigma_1(A_1x)))= (\sigma(\tri*{a,x}), \sigma(-\tri*{a,x}), 0,\ldots,0)$. Finally, the choice of $A_{L} = \bfe_{1} - \bfe_{2}$ gives $f(x) = \sigma(\tri*{a,x}) - \sigma(-\tri*{a,x}) = \tri*{a,x}$.

Observe that for all $k\in\crl*{2,\ldots,L-1}$, $\nrm{A_{k}}_{\sigma}=1$. For the other layers, $\nrm{A_{L}}_{\sigma}=\nrm{A_{L}}_{2}=\sqrt{2}$ and $\nrm*{A_{1}}_{\sigma}=\sqrt{2}\cdot{}r$, which implies $f\in\mc{F}(2r)$.

Combining the pieces,
\[
  \Rad(\mc{F}(2r)_{|S}) \geq{} \Rad(\mc{G}(r)_{|S}) = \En\sup_{a:\nrm*{a}_{2}\leq{}r}\sum_{t=1}^{n}\eps_{t}\tri*{a,x_{t}} = r\cdot{}\En\nrm*{\sum_{t=1}^{n}\eps_{t}x_t}_{2}.
\]
Finally, by the Khintchine-Kahane inequality there exists $c>0$ such that
  \[
    \En\nrm*{\sum_{t=1}^{n}\eps_{t}x_t}_{2}\geq{}c\cdot{}\sqrt{\sum_{t=1}^{n}\nrm*{x_{t}}_{2}^{2}} = c\nrm*{X}_{2}.
  \qedhere
  \]
\end{proof}

\end{document}